\documentclass{article}

\usepackage{graphicx,xcolor,wrapfig,subcaption}
\usepackage{amsfonts,amsmath,amsthm,amssymb}
\usepackage{bm}
\usepackage[capitalize]{cleveref}
\usepackage{booktabs,threeparttable,makecell}
\usepackage{multicol,multirow}
\usepackage{thmtools,thm-restate}
\usepackage{etoolbox}

\newif\ifinappendix
\inappendixfalse

\pretocmd{\appendix}{\inappendixtrue}{}{}

\theoremstyle{plain}
\newtheorem{theorem}{Theorem}[section]

\newtheorem{lemma}[theorem]{Lemma}

\theoremstyle{definition}

\theoremstyle{remark}

\newcommand{\dataset}[1]{\mathcal{#1}}

\newcommand{\result}[2]{#1{\tiny$\pm$#2}}
\newcommand{\kldiv}[1]{\operatorname{KL}\left(#1\right)}
\newcommand{\gjsdiv}[2]{\operatorname{JS}_{#1}\left(#2\right)}
\newcommand{\tvdist}[1]{\operatorname{TV}\left(#1\right)}
\newcommand{\domain}[1]{\operatorname{dom}\left(#1\right)}

\newcommand{\E}{\mathbb{E}}

\newcommand{\errF}{\epsilon^{\operatorname{F}}}
\newcommand{\errCF}{\epsilon^{\operatorname{CF}}}

\newcommand{\method}{CAETC}
\newcommand{\methodlower}{causal autoencoding and treatment conditioning}
\newcommand{\methodcapitalize}{Causal Autoencoding and Treatment Conditioning}
\newcommand{\methodcell}[1]{\makecell{ CAETC \vspace{-2pt} \\ {\tiny #1}}}

\usepackage[preprint]{icml2026}

\icmltitlerunning{\methodcapitalize}

\begin{document}

\twocolumn[
  \icmltitle{\method: \methodcapitalize \\ for Counterfactual Estimation over Time}

  \begin{icmlauthorlist}
    \icmlauthor{Nghia D. Nguyen}{siebel,vishc}
    \icmlauthor{Pablo Robles-Granda}{siebel}
    \icmlauthor{Lav R. Varshney}{sb,ece}
  \end{icmlauthorlist}

  \icmlaffiliation{vishc}{VinUni-Illinois Smart Health Center, VinUniversity}  
  \icmlaffiliation{siebel}{Siebel School of Computing and Data Science, University of Illinois Urbana-Champaign}
  \icmlaffiliation{ece}{Department of Electrical and Computer Engineering, University of Illinois Urbana-Champaign}
  \icmlaffiliation{sb}{AI Innovation Institute, Stony Brook University}

  \icmlcorrespondingauthor{Nghia Nguyen}{nghiadn2@illinois.edu}

  \vskip 0.3in
]

\printAffiliationsAndNotice{}  

\begin{abstract}
Counterfactual estimation over time is important in various applications, such as personalized medicine. However, time-dependent confounding bias in observational data still poses a significant challenge in achieving accurate and efficient estimation. We introduce {\methodlower} ({\method}), a novel method for this problem. Built on adversarial representation learning, our method leverages an autoencoding architecture to learn a partially invertible and treatment-invariant representation, where the outcome prediction task is cast as applying a treatment-specific conditioning on the representation. Our design is independent of the underlying sequence model and can be applied to existing architectures such as long short-term memories (LSTMs) or temporal convolution networks (TCNs). We conduct extensive experiments on synthetic, semi-synthetic, and real-world data to demonstrate that {\method} yields significant improvement in counterfactual estimation over existing methods.

\end{abstract}

\section{Introduction}

Personalized medicine requires knowledge about individualized responses to potential treatments for effective treatment planning \cite{kentPersonalizedEvidenceBased2018,feuerriegelCausalMachineLearning2024}. While randomized controlled trials (RCTs) are the gold standard for treatment effect estimation, RCTs in practice are often costly and difficult to implement. Digitization enables more healthcare data to be recorded, especially via electronic health records (EHRs). Combining large data volumes with appropriate modeling can enable accurate estimation of counterfactual outcomes and treatment effects, forming the basis for individualized decision making in health care \cite{bareinboimCausalInferenceDatafusion2016} and other domains.

Modern counterfactual estimation methods integrate frameworks from causal inference and deep learning for counterfactual estimation over time \citep{limForecastingTreatmentResponses2018,bicaEstimatingCounterfactualTreatment2019,melnychukCausalTransformerEstimating2022,bouchattaouiCausalContrastiveLearning2024,wangEffectiveEfficientTimeVarying2024}. However, counterfactual estimation over time is difficult due to time-dependent confounding bias, where covariates that affect treatment choices evolve dynamically and are themselves influenced by prior treatments \citep{robinsMarginalStructuralModels2000}. This results in systematic differences in the distribution of confounders between treatment regimes, violating key identification assumptions and complicating the estimation of unbiased causal effects. Conventional time-series models lack the capacity to adjust for this bias, relying on advanced methods to disentangle confounders from treatment assignment. 

Learning to balance or disentangle time-dependent confounders is crucial for unbiased counterfactual estimation over time. Modern methods leverage sequence architectures like recurrent neural networks, transformers, or state-space models, with de-confounding mechanisms, like inverse probability of treatment weighting or adversarial balancing, to mitigate time-varying confounding bias, enabling better long-term treatment effect prediction in complex, sequential decision-making settings \citep{limForecastingTreatmentResponses2018,bicaEstimatingCounterfactualTreatment2019,melnychukCausalTransformerEstimating2022,bouchattaouiCausalContrastiveLearning2024,wangEffectiveEfficientTimeVarying2024}. For instance, recurrent marginal structure network (RMSN) \cite{limForecastingTreatmentResponses2018} uses an LSTM architecture with inverse probability of treatment weighting (IPTW). Alternatively, counterfactual recurrent network (CRN) \cite{bicaEstimatingCounterfactualTreatment2019} uses LSTM architecture with a gradient reversal layer, whereas causal transformer (CT) \cite{melnychukCausalTransformerEstimating2022} has a transformer architecture with domain confusion loss to learn explicit treatment-invariant representation.

Despite these remarkable advances, several key limitations persist. First, existing methods (CRN, CT) suffer from the loss of covariate information due to adversarial training \citep{huangEmpiricalExaminationBalancing2024}. Achieving treatment invariance with an expressive, invertible representation that encodes sufficient information on history for identification of causal effects remains a challenge. Consequently, models risk learning shortcuts that limit their ability to recover individual-level causal responses or suffer from loss of heterogeneity \cite{melnychukBoundsRepresentationInducedConfounding2023}. More recent methods, such as causal contrastive predictive coding (CCPC) \cite{bouchattaouiCausalContrastiveLearning2024}, tackle the problem implicitly by using the InfoMax principle to encourage representation invertibility. Conversely, Mamba-CDSP \cite{wangEffectiveEfficientTimeVarying2024} uses an architecture-specific loss to decorrelate history with planned treatment to circumvent adversarial training. Second, existing methods put little emphasis on modeling the interaction between the planned treatment and the balanced representation to decode future outcomes. While the representation balancing stage yields a theoretically valid representation, the interaction between the representation and planned treatment directly affects the future outcome prediction (see \cref{fig:architecture}A). By making this interaction explicit, we intuitively and simultaneously encourage representation invertibility and predict future outcomes without excessive architecture changes or complex training procedures. 

Inspired by these recent methods and their constraints, we present an architecture-agnostic method called \emph{{\methodlower} (\method)} for counterfactual estimation over time. Our methodological contributions are the following:
\begin{itemize}
    \item We design a model-agnostic method for counterfactual estimation over time that encourages a partially invertible representation via autoencoding and predicts future outcomes via treatment conditioning. We further encourage the conditioning layer to learn a treatment-specific transformation to improve counterfactual estimation performance. We apply the proposed design on long-short term memory (LSTM) and temporal convolution network (TCN). 
    \item We propose an entropy maximization adversarial game that yields theoretically balanced representation across all treatment regimes. The adversarial game is equivalent to minimizing a generalized Jensen-Shannon divergence between treatment-conditional representation distributions. We further demonstrate that under certain assumptions and distributional conditions, the outcome estimation error is bounded by the aforementioned Jensen-Shannon divergence term.
    \item We empirically validate {\method} on synthetic, semi-synthetic, and real-world datasets to demonstrate that our method achieves strong improvement over existing counterfactual estimation baselines.
\end{itemize}

\section{Related Works}

Counterfactual estimation has received great attention in both static and temporal settings. 

Here, we describe prior methods in the temporal domain.

\subsection{Counterfactual estimation over time} 

Methods that control for time-dependent confounding were originally developed in epidemiology for longitudinal studies, such as the structural nested model (SNM) \cite{robinsCorrectingNoncomplianceRandomized1994} and marginal structural model (MSM) \cite{robinsMarginalStructuralModels2000}. However, these methods have traditionally been implemented using simple parametric estimators with limited capacity to deal with complex, high-dimensional data. RMSN improves over MSM by incorporating LSTM to handle nonlinearities in the data. However, RMSN still relies on IPTW and suffers from the same problems, such as high variance. Alternatively, G-Net \cite{liGNetRecurrentNetwork2021} adapts g-computation \cite{robinsNewApproachCausal1986, robinsGraphicalApproachIdentification1987} to sequence model. A class of recent methods, such as CRN and CT \citep{bicaEstimatingCounterfactualTreatment2019, melnychukCausalTransformerEstimating2022}, uses domain adversarial training to learn a treatment-invariant representation, removing the systematic differences between treatment regimes. Further progress has been made toward different approaches to handle time-dependent confounding, such as adapting ODE discovery to treatment effect estimation \cite{kacprzykODEDiscoveryLongitudinal2023}. 

\subsection{Learning treatment-invariant representation}

Recent approaches adopt adversarial treatment-invariant representation learning \citep{bicaEstimatingCounterfactualTreatment2019, melnychukCausalTransformerEstimating2022, bouchattaouiCausalContrastiveLearning2024} as a foundational building block to balance between treatment groups. More specifically, CRN \cite{bicaEstimatingCounterfactualTreatment2019} uses a gradient reversal layer (GRL) \cite{ganinDomainAdversarialTrainingNeural2016} between an LSTM representation network and a treatment balancing head. The treatment balancing head is optimized to predict the treatment domain from the representation, whereas the GRL encourages a treatment-invariant representation, forming an adversarial game. Similarly, CT \cite{melnychukCausalTransformerEstimating2022} replaces the LSTM backbone with a combination of three transformer sub-networks to capture long-range dependencies and proposes an adversarial game based on the domain confusion loss. However, an empirical study by \citet{huangEmpiricalExaminationBalancing2024} demonstrates that adversarial training results in a significant loss of covariate information, which affects the outcome prediction performance of CRN and CT. Representation invertibility is used as a theoretical tool to ensure a valid representation \cite{shalitEstimatingIndividualTreatment2017,johanssonGeneralizationBoundsRepresentation2022, melnychukBoundsRepresentationInducedConfounding2023}, which can help improve counterfactual estimation performance. CCPC \cite{bouchattaouiCausalContrastiveLearning2024} implicitly encourages reconstructable representation by maximizing mutual information between the input and representation following the InfoMax principle. Alternatively, Mamba-CDSP \cite{wangEffectiveEfficientTimeVarying2024} proposes an architecture-specific decorrelation loss to avoid the over-balancing problem of adversarial training. However, instead of matching the treatment conditional distribution, Mamba-CDSP is only equivalent to first-order moment matching of treatment regimes. 

\section{Problem Formulation}

\subsection{Outcome forecasting}

\begin{figure}[!ht]
    \centering
    \includegraphics[width=0.75\linewidth]{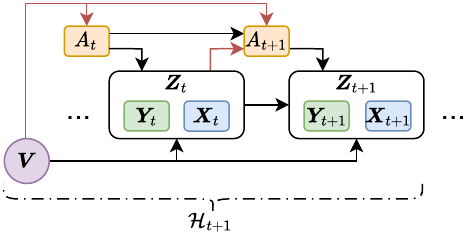}
    \caption{Causal graph for time-dependent confounding over $\mathcal{H}_t$.}
    \vspace{-5pt}
    \label{fig:time-dependent-confounding}
\end{figure}

Our method is designed in the context of the potential outcome framework \cite{rubinCausalInferenceUsing2005}. Further details on assumptions for causal identification are in \cref{sec:assumptions}. Consider a longitudinal dataset $\dataset{D}=\left\{\left\{A_{i,t}, \bm{Y}_{i,t}, \bm{X}_{i,t}\right\}_{t=1}^{T}, \bm{V}_{i}\right\}_{i=1}^N$ of $N$ units and $T$ time steps with \emph{categorical treatments} $A_{i,t}$, vector of \emph{continuous outcomes of interest} $\bm{Y}_{i,t}$, vector of \emph{continuous time-varying covariates} $\bm{X}_{i,t}$, and vector of \emph{static covariates} $\bm{V}_{i}$. Note that $\bm{Y}_{i,t}$ and $\bm{X}_{i,t}$ are disjoint covariates. Let $\bm{Z}_{i,t} = [\bm{X}_{i,t}, \bm{Y}_{i,t}]$ be the vector of all considered time-varying covariates. Since $\bm{X}_{i,t}$ may not be present for every dataset $\mathcal{D}$, $\bm{Z}_{i,t}$ is simply $\bm{Y}_{i,t}$ in that case. The history of a unit $i$ up to time $T_0$ is defined as $\mathcal{H}_{i,T_0}=\left\{\{A_{i,t}, \bm{Z}_{i,t}\}_{t=1}^{T_0}, \bm{V}_{i}\right\}$. The treatment $A_{i,t}$ affects the time-varying covariates $\bm{Z}_{i,t}$. \footnote{In several works, the treatment $A_{i,t-1}$ is denoted to affect the covariates $\bm{Z}_{i,t}$. We depart from this notation to ensure the history $\mathcal{H}_{i, t}$ has the same number of time steps for both the treatment $A$ and covariates $\bm{Z}$. This change helps simplify the notation in several places while also easing implementation.} For simplicity, we omit the unit index $i$ in the remainder of this article.

Consider the discrete treatment $A_{t} \in \{a^{(1)}, ..., a^{(K)}\}$ with $K$ possible values and denote the potential outcome under treatment $a^{(k)}$ as $\bm{Y}_{t}[a^{(k)}]$. Let the sequence of treatments from time $t+1$ to time $t+\tau$ as $A_{t+1:t+\tau}$. The task of interest is to estimate or forecast the outcome of the next $\tau$ time steps (prediction horizons) given the history up to time $T_0$ and the non-random sequence of treatment $a_{T_0+1:T_0+\tau}$:

{
\small
\begin{equation}
 \E [\bm{Y}_{T_0+\tau}[a_{T_0+1:T_0+\tau}] \mid \mathcal{H}_{T_0}].
\end{equation}
}

Practically, using a neural network, the history $\mathcal{H}_t$ can be encoded into a latent space by a \emph{representation network} $\Phi$ before being decoded into estimated outcomes by an \emph{outcome head} $F_{Y}$, similar to prior works \cite{shalitEstimatingIndividualTreatment2017, bicaEstimatingCounterfactualTreatment2019, melnychukCausalTransformerEstimating2022, bouchattaouiCausalContrastiveLearning2024}. Using the autoregressive decoding strategy, the estimation quantity can be expressed as:

{
\small
\begin{equation}
    F^Y(\Phi(\mathcal{H}_{t}), A_{t+1}) \approx \E [{\bm{Y}_{t+1} \mid \Phi(\mathcal{H}_{t}), A_{t+1}}].
\end{equation}
}

The existence of backdoor paths from static covariates $\bm{V}$ and current time-varying covariates $\bm{Z}_t$ to future treatment $A_{t+1}$, and the fact that current treatment $A_t$ also affects current time-varying covariates $\bm{Z}_t$ creates the time-dependent confounding. The confounding paths are shown in \cref{fig:time-dependent-confounding} in red. Under time-dependent confounding, the estimation of future outcomes is biased \cite{Robins2008EstimationOT}. 

\section{Method}

\subsection{Architecture overview}

Similarly to prior works, we parametrize the representation layer $\Phi$ by a sequential network such as LSTM (CRN) or transformer (CT). After the representation layer $\Phi$, we make an intuitive but important distinction in the architecture. In prior works, the learned representation is then forwarded to the respective treatment balancer $F^B$, or is concatenated with the next planned treatment $A_{t+1}$ to decode the next outcome $\bm{Y}_{t+1}$, as in \cref{fig:architecture}A. In this case, if $\Phi(\mathcal{H}_t)$ is high-dimensional, then the influence of $A_{t+1}$ is significantly reduced \citep{shalitEstimatingIndividualTreatment2017}. 

Instead of the treatment $A_{t+1}$ being concatenated with $\Phi(\mathcal{H}_t)$ at the outcome decoder $F^Y$, we consider the treatment as conditioning information that transforms the representation at the $F^C$ layer, as in \cref{fig:architecture}D. By explicitly modeling the treatment as conditioning information, more complex $F^C$ can be designed to modulate the representation while also enabling more flexible interactions that serve as building blocks for the rest of our method. Details on $F^C$ are deferred to \cref{sec:treatment-conditioning}.

\begin{figure*}[!ht]
    \centering
    \includegraphics[width=0.9\linewidth]{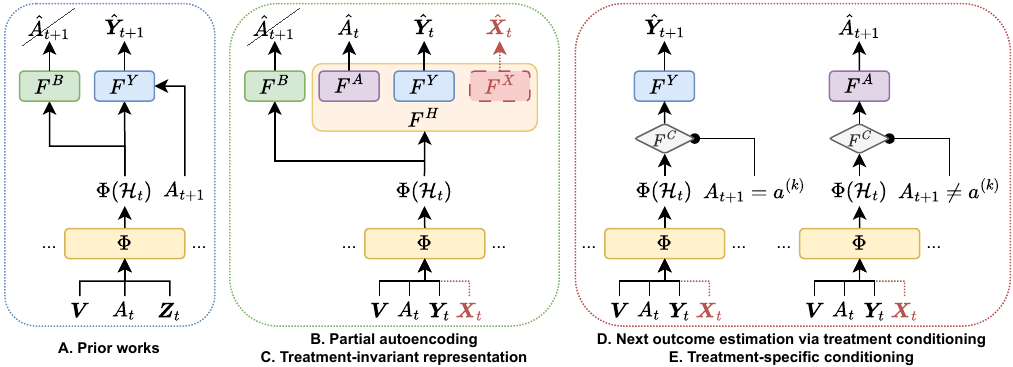}
    \caption{\textbf{A.} Prior works concatenate the representation $\Phi(\mathcal{H}_t)$ with treatment $A_{t+1}$ to predict next outcomes at $F^Y$. We make a distinction to model treatment $A_{t+1}$ as a transformation on the representation before being forwarded to respective heads. \textbf{B.} The history $\mathcal{H}_t$ is encoded into representation $\Phi(\mathcal{H}_t)$ before being forwarded to respective heads for decoding. The treatment, outcome, and time-varying covariates estimators $F^A$, $F^Y$ and $F^X$ reconstruct $A_{t}$, $\bm{Y}_t$ and $\bm{X}_t$. \textbf{C.} Simultaneously, balancing is applied to $\Phi(\mathcal{H}_t)$ by maximizing the entropy of the treatment balancer $F^B$. \textbf{D.} $A_{t+1}$-specific transformation is applied to the $\Phi(\mathcal{H}_t)$ before being decoded to the next outcomes by $F^Y$. \textbf{E.} The conditioning layer $F^C$ is encouraged to learn treatment-specific transformations of $\Phi(\mathcal{H}_t)$.}
    \vspace{-5pt}
    \label{fig:architecture}
\end{figure*}

\subsection{Input autoencoding}

We start with a base autoencoding network to satisfy the representation invertibility as in \cref{fig:architecture}B. At time step $T_0$, we encourage the decoding of current covariates $\{A_{T_0}, \bm{Z}_{T_0}\}$. Let $F^H$ be the time step decoding head, which includes three sub-heads: the \emph{treatment decoding head} $F^A$, the \emph{outcome of interest decoding head} $F^Y$, and the \emph{time-varying covariate decoding head} $F^X$ if $\bm{X}_t$ exists in the dataset. Note that while we encourage the decoding of the current time step $T_0$ for simplicity and efficiency, more decoding heads $F^H$ can be added to decode the last $\nu$ time steps. However, existing architectures for sequence have built-in mechanisms to selectively remove irrelevant information from the history, i.e., the forget gate in LSTM or the attention in transformer. Therefore, forcing decoding of many past inputs requires the network to remember those steps, which impairs the powerful learning ability of these mechanisms and can negatively impact performance \citep{mackayReversibleRecurrentNeural2018}. 

We define the reconstruction loss for categorical treatments $A_t$ as the cross-entropy, and for continuous outcomes of interest $\bm{Y}_t$ and continuous time-varying covariates $\bm{X}$ as the mean squared error. For $\mathcal{H}_t, A_t, \bm{Y}_t, \bm{X}_t, \sim \mathcal{D}$, we have:

{
\small
\begin{equation} \label{eq:loss-rect-treatment}
    \mathcal{L}^{RA}_t (\theta_{F^A}, \theta_{\Phi}) = - \sum_{j=1}^K \mathbb{I}_{[A_{t}=a^{(j)}]} \log F^A_j(\Phi(\mathcal{H}_t))
\end{equation}
\begin{equation} \label{eq:loss-rect-outcome}
    \mathcal{L}^{RY}_t (\theta_{F^Y}, \theta_{\Phi}) = \left\|F^Y(\Phi(\mathcal{H}_t)) - \bm{Y}_{t}\right\|_2^2
\end{equation}
\begin{equation} \label{eq:loss-rect-covariate}
    \mathcal{L}^{RX}_t (\theta_{F^X}, \theta_{\Phi}) = \left\|F^X(\Phi(\mathcal{H}_t)) - \bm{X}_{t}\right\|_2^2 
\end{equation}
}

where $\mathbb{I}_{\mathrm{C}}$ is the indicator function for condition $\mathrm{C}$. $\theta_{F^A}, \theta_{F^Y}, \theta_{F^X}, \theta_{\Phi}$ are the parameters of $F^A, F^Y$, $F^X$, and $\Phi$, accordingly.

The total reconstruction loss is then simply a combination of the above objectives with hyperparameters $\delta^A$ and $\delta^X$. If $\bm{X}_t$ is not present in the dataset, then $\delta^X=0$.

{
\small
\begin{equation} \label{eq:reconstruction}
    \begin{aligned}
        \mathcal{L}^{R}_t (\theta_{F^A}, \theta_{F^Y}, \theta_{F^X}, \theta_{\Phi}) &=  \mathcal{L}^{RY}_t (\theta_{F^Y}, \theta_{\Phi}) \\
        &+\delta^A \mathcal{L}^{RA}_t (\theta_{F^A}, \theta_{\Phi}) \\
        &+ \delta^X \mathcal{L}^{RX}_t (\theta_{F^X}, \theta_{\Phi}).
    \end{aligned}
\end{equation}
}

Prior empirical work by \citet{huangEmpiricalExaminationBalancing2024} demonstrates that adversarial training (introduced in \cref{sec:treatment-invariant}) is unstable and may cause the model to lose covariate information. Hence, the model risks violating representation invertibility \citep{shalitEstimatingIndividualTreatment2017, zhangLearningOverlappingRepresentations2020, johanssonGeneralizationBoundsRepresentation2022} and suffers loss of heterogeneity \cite{melnychukBoundsRepresentationInducedConfounding2023}. Despite being an important property to ensure causal identification, representation invertibility, in the context of longitudinal data, has rarely been enforced in practice (CRN - \citet{bicaEstimatingCounterfactualTreatment2019}; CT - \citet{melnychukCausalTransformerEstimating2022}) or only been enforced implicitly (CCPC - \citet{bouchattaouiCausalContrastiveLearning2024}). Our design starts from explicit partial invertibility as a foundational building block instead.

\subsection{Outcomes prediction via treatment conditioning} \label{sec:treatment-conditioning}

The task of interest is predicting the outcome for the next time step, which can be cast as applying a treatment-specific transformation on the representation $\Phi(\mathcal{H}_t)$ that corresponds to the future outcomes. This conditioning mechanism may be considered as an inductive bias, i.e, the changes in $\bm{Z}_{t+1}$ are caused by applying $A_{t+1}$.

More specifically, we apply an element-wise affine transformation to the representation, also known as the feature-wise linear modulation (FiLM) \citep{perezFiLMVisualReasoning2018}. For each treatment $A_{t+1}=a^{(i)}$, FiLM learns a scaling vector $\xi^{(i)}=R^\xi(a^{(i)})$ and bias vector $\beta^{(i)} = R^\beta(a^{(i)})$, where $R^\xi, R^\beta$ are the generators that produce the corresponding vectors. The conditioning layer $F^C$ can then be defined as:

{
\small
\begin{equation}
    F^C (\Phi(\mathcal{H}_t), a^{(i)}) = \Phi(\mathcal{H}_t) \odot \xi^{(i)} \oplus \beta^{(i)}
\end{equation}
}

where $\odot$ and $\oplus$ are element-wise multiplication and addition, respectively. Implementation-wise, $R^\xi$ and $R^\beta$ can be parameterized by a bag of embeddings for categorical treatment or a linear projection for continuous treatment.

Reusing the outcome regressor $F^Y$, we define the next outcome prediction loss for $\mathcal{H}_t, A_{t+1}, \bm{Y}_{t+1} \sim \mathcal{D}$ as:

{
\small
\begin{equation} \label{eq:loss-pred-outcome}
    \mathcal{L}^Y_t (\theta_{F^A}, \theta_{F^C}, \theta_{\Phi}) = \left\| F^Y (F^C (\Phi(\mathcal{H}_t), A_{t+1} )) - \bm{Y}_{t+1}\right\|_2^2
\end{equation}
}

Prior works combine the treatment $A_{t+1}$ with the representation $\Phi(\mathcal{H}_t)$ via  concatenation \citep{bicaEstimatingCounterfactualTreatment2019, melnychukCausalTransformerEstimating2022, bouchattaouiCausalContrastiveLearning2024}. After the next linear layer, this combination can be considered FiLM's special case, which is a learnable bias vector. However, this prior approach is limited in the expressiveness of the conditioning mechanism. Note that while we use an efficient FiLM mechanism, a more complex $F^C$ can be used. 

\subsection{Treatment-invariant representation learning} \label{sec:treatment-invariant}

Leveraging adversarial domain adaptation, recent methods learn a treatment-invariant representation $\Phi(\mathcal{H}_{t})$, breaking the association of history $\mathcal{H}_{t}$ and planned treatment $A_{t+1}$. Let $P^{\Phi}_{(j)}$ be the distribution of $\Phi(\mathcal{H}_{t})$ conditioned on a specific treatment $A_{t+1}=a^{(j)}$ for a fixed time step $t$, then recent methods aim to satisfy:

{
\small
\begin{equation}
    P^{\Phi}_{(1)} = P^{\Phi}_{(2)} = \dots = P^{\Phi}_{(K)}.      
    \label{eq:treatment-invariant}
\end{equation}
}

This aspect has been thoroughly explored in the literature. Let $P^A_{(j)}$ be the marginal distribution of $A_{t+1}=a^{(j)}$ for a fixed time step $t$, the simplified optimization objective for each method is equivalent to:

{
\small
\begin{equation}
    \begin{gathered}
        \min_{\Phi} \sum_{j=1}^{K} \frac{1}{K} \kldiv{P^{\Phi}_{(j)} \mid\mid \frac{1}{K}\sum_{k=1}^{K} P^{\Phi}_{(k)}} \\
        \text{\scriptsize CRN \citep[Appendix D]{bicaEstimatingCounterfactualTreatment2019}}
    \end{gathered}
    \label{eq:adv-crn}
\end{equation}
\begin{equation}
    \begin{gathered}
        \min_{\Phi} \sum_{j=1}^{K} \frac{1}{K} \kldiv{\sum_{k=1}^{K} P^A_{(k)} P^{\Phi}_{(k)} \mid\mid P^\Phi_{(j)}} \\ 
        \text{\scriptsize CT \citep[Appendix F]{melnychukCausalTransformerEstimating2022}} 
    \end{gathered}
    \label{eq:adv-ct}
\end{equation}
\begin{equation}
    \begin{gathered}
        \min_{\Phi} \sum_{j=1}^{K} P^A_{(j)}  \kldiv{P^{\Phi}_{(j)} \mid\mid \sum_{k=1}^{K} P^A_{(k)} P^{\Phi}_{(k)}} + D\\
        \text{\scriptsize CCPC \citep[Appendix G.6]{bouchattaouiCausalContrastiveLearning2024}} 
    \end{gathered}
    \label{eq:adv-ccpc}
\end{equation}
}

where \eqref{eq:adv-crn} is equivalent to minimizing a multi-distribution Jensen-Shannon divergence with equal weights and \eqref{eq:adv-ccpc} with weighting factors $P^A_{(1)}, ..., P^A_{(K)}$ \citep{linDivergenceMeasuresBased1991} plus an additional expected divergence term $D$. The minima for \eqref{eq:adv-crn}, \eqref{eq:adv-ct}, and \eqref{eq:adv-ccpc} satisfy \eqref{eq:treatment-invariant}. 

\textbf{Adversarial entropy maximization.} Similar to existing works, we aim to learn a representation that satisfies \eqref{eq:treatment-invariant}. Consider the \emph{balancing head} $F^B$, define the treatment classification loss as the cross-entropy for $\mathcal{H}_t, A_{t+1} \sim \mathcal{D}$:

{
\small
\begin{equation} \label{eq:loss-pred-domain}
    \mathcal{L}^{B}_t (\theta_{F^B}) = - \sum_{j=1}^K \mathbb{I}_{[A_{t+1}=a^{(j)}]} \log F^B_{j}(\Phi(\mathcal{H}_t)).
\end{equation}
}

While the balancing head $F^B$ predicts the confounded planned treatment $A_{t+1}$, the representation $\Phi(\mathcal{H}_t)$ confuses the balancing head by removing such information. To learn a treatment-invariant representation, we propose to maximize the prediction entropy of $F^B$ for $\mathcal{H}_t \sim \mathcal{D}$:

{
\small
\begin{equation} \label{eq:loss-entropy-max}
    \mathcal{L}^{E}_t (\theta_{\Phi}) = \sum_{j=1}^{K} F^B_j (\Phi(\mathcal{H}_t))  \log F^B_j(\Phi(\mathcal{H}_t)).
\end{equation}
}

Minimizing $\mathcal{L}^E_t$ is equivalent to maximizing the prediction entropy. For a fixed time step $t$, let $\bm{H}=\mathcal{H}_t$, we define the adversarial game as:

{
\small
\begin{equation} \label{eq:adv-game}
    \begin{gathered}
   \underset{F^B}{\arg\min} - \sum_{j=1}^K \E_{\Phi(\bm{H})|A=a^{(j)}}\left[\log  F^B_j(\Phi(\bm{H}))\right] P^A_{(j)} \\
    \underset{\Phi}{\arg \max} - \sum_{j=1}^K \E_{\Phi(\bm{H})}\left[F^B_j(\Phi(\bm{H})) \log F^B_j(\Phi(\bm{H}))\right] \\
    \text{subject to } \sum_{j=1}^{K} F^B_j(\Phi(\bm{H})) =1.
    \end{gathered}
\end{equation}
}

Next, we show that the equilibrium of \eqref{eq:adv-game} satisfies \eqref{eq:treatment-invariant}. In other words, the representation is treatment-invariant.

\begin{restatable}{theorem}{globalOptimumTheorem} \label{thm:global-optimum}
For a fixed $t$, there exists a pair $\Phi$ and $F^B$ that satisfies the equilibrium in \eqref{eq:adv-game}. The equilibrium holds if and only if $\Phi$ satisfies \eqref{eq:treatment-invariant}.
\end{restatable}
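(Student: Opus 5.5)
We argue in the style of the GAN analysis of Goodfellow et al. Write $z=\Phi(\bm{H})$, $p_j$ for the density of $P^{\Phi}_{(j)}$, $\pi_j=P^A_{(j)}$, and $p(z)=\sum_k \pi_k p_k(z)$ for the marginal density of the representation.

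\textbf{Step 1 (optimal balancer for fixed $\Phi$).} The balancer objective in \eqref{eq:adv-game} is
\[
-\int \sum_j \pi_j p_j(z)\log F^B_j(z)\,dz .
\]
We minimize pointwise in $z$ under the simplex constraint $\sum_j F^B_j(z)=1$. A Lagrange multiplier argument, or Gibbs' inequality, gives the unique minimizer on the support of $p$:
\[
F^{B*}_j(z)=\frac{\pi_j p_j(z)}{p(z)},
\]
which is the posterior $P(A_{t+1}=a^{(j)}\mid z)$.

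\textbf{Step 2 (the representation's payoff at $F^{B*}$).} Substituting $F^{B*}$ into the $\Phi$ objective gives
\[
\E_{z\sim p}\big[\mathrm{H}(A\mid z)\big]=\mathrm{H}(A)-I(A;z).
\]
The identity $I(A;z)=\sum_j \pi_j\,\kldiv{P^{\Phi}_{(j)} \mid\mid \sum_k \pi_k P^{\Phi}_{(k)}}$ shows that this term is the generalized Jensen--Shannon divergence $\gjsdiv{\pi}{P^{\Phi}_{(1)},\dots,P^{\Phi}_{(K)}}$. So maximizing entropy is equivalent to minimizing this divergence. The divergence is nonnegative, and it vanishes iff $p_j=p$ for every $j$ with $\pi_j>0$, i.e.\ iff \eqref{eq:treatment-invariant} holds. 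We assume every treatment has positive probability; without this, the statement only constrains the $p_j$ of treatments that occur. The upper bound $\mathrm{H}(A)$ on the payoff is attained exactly at invariant $\Phi$.

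\textbf{Step 3 (equilibrium characterization).} For existence, take any constant $\Phi$, which is trivially invariant, together with $F^B_j\equiv\pi_j$.
\begin{itemize}
\item $F^B$ is a best response by Step 1, since $p_j=p$ gives $F^{B*}_j=\pi_j$.
\item $\Phi$ is a best response because against the constant balancer every $\Phi$ yields payoff $\mathrm{H}(\pi)$, so no deviation improves it.
\end{itemize}
For the ``if'' direction, any invariant $\Phi$ paired with $F^{B*}=\pi$ is an equilibrium by the same argument. For the ``only if'' direction, suppose $(\Phi,F^B)$ is an equilibrium. Then $F^B=F^{B*}_\Phi$ by Step 1.

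\textbf{Main obstacle.} It remains to show that $\Phi$ is invariant, and this is the delicate step. With $F^B$ held fixed, $\Phi$ is judged by entropy of a fixed function, not by the JSD. If $\Phi$ is not invariant, then $F^{B*}_\Phi$ is non-constant. It then has some point $z^\star$ with higher entropy $\mathrm{H}(F^{B*}(z^\star))$ than the average $\E_p[\mathrm{H}(F^{B*}(z))]$. Mapping every history to $z^\star$, assuming $\Phi$ ranges over a class that contains constants or is sufficiently rich, strictly increases $\Phi$'s payoff, so $\Phi$ was not a best response. This gives a contradiction.

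The care needed is in handling this stationarity rigorously; measure-zero sets are treated by working on the support of $p$. Alternatively, and as in the standard GAN treatment, we could interpret ``equilibrium'' as the optimum of the $\Phi$-objective evaluated at the optimal balancer, i.e.\ the min--max value. In that reading, Step 2 directly gives the iff. We expect to state this interpretation explicitly and present the Step 2 argument as the core of the proof.
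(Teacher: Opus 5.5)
Your Steps 1 and 2 are exactly the paper's proof. The paper's lemma obtains the posterior balancer $F^{B*}_j=\pi_j p_j/\sum_k\pi_k p_k$ by the same pointwise Lagrange argument. It then substitutes this into the $\Phi$-objective and rewrites the result as $\sum_j \pi_j \kldiv{P^{\Phi}_{(j)} \mid\mid \sum_k \pi_k P^{\Phi}_{(k)}}$ plus the constant $\sum_j\pi_j\log\pi_j=-\mathrm{H}(A)$, which is your $\mathrm{H}(A)-I(A;z)$ identity in different notation. It concludes from nonnegativity of the weighted Jensen--Shannon divergence and the fact that it vanishes iff all $P^{\Phi}_{(j)}$ coincide. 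So the paper adopts precisely the min--max reading you offer as a fallback. Your constant-$\Phi$ witness usefully makes explicit the existence claim that the paper leaves implicit.

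The Nash-equilibrium argument in your ``main obstacle'' paragraph, however, fails. In general it cannot be repaired, because the claim it is meant to prove is false under that reading. The false step is the passage from ``$F^{B*}$ is non-constant'' to ``some $z^\star$ has above-average entropy'': the posterior can vary while its entropy does not.

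Take $K=2$ and two history groups of mass $\tfrac12$ each, with $P(A_{t+1}=a^{(1)}\mid\bm{H})$ equal to $0.8$ on one group and $0.2$ on the other, so $\pi_1=\pi_2=\tfrac12$. Let $\Phi$ send the groups to $z_1$ and $z_2$. Then $F^{B*}(z_1)=(0.8,0.2)$ and $F^{B*}(z_2)=(0.2,0.8)$, so $P^{\Phi}_{(1)}\neq P^{\Phi}_{(2)}$. Yet the entropy is identical at both support points, so no support point beats the average.

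The balancer's loss never evaluates $F^B$ off the support, so you may set $F^B$ to a one-hot vector there and it remains a best response. Then $\sup_z \mathrm{H}(F^B(z))$ is attained by the current $\Phi$, so $\Phi$ is also a best response. This gives a simultaneous-move equilibrium with a non-invariant $\Phi$ (for $K\ge 3$ such examples can be built even with continuous $F^B$).

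Hence the ``only if'' direction holds only in the Stackelberg/min--max sense, where $\Phi$ is scored against its own best-responding balancer $F^{B*}_{\Phi}$. State that interpretation up front and let Step 2 be the whole proof.
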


We also obtained the simplified optimization objective for \cref{eq:adv-game} as:

{
\small
\begin{equation}
    \min_{\Phi} \sum_{j=1}^{K} P^A_{(j)}  \kldiv{P^{\Phi}_{(j)} \mid\mid \sum_{k=1}^{K} P^A_{(k)} P^{\Phi}_{(k)}}.
\end{equation}
}

This is similar to the objective of CCPC in \cref{eq:adv-ccpc} but with a parsimonious and efficient implementation as entropy maximization instead of minimizing the CLUB \citep{chengCLUBContrastiveLogratio2020} mutual information upper bound.

Ideally, achieving the equilibrium condition of \eqref{eq:treatment-invariant} and satisfying representation invertibility results in a neural network with low estimation error for counterfactual outcomes. However, achieving the equilibrium of the adversarial game can be difficult in practice.

\textbf{Bound on prediction error.} Considering binary treatment $A\in \{0,1\}$, we can further show that the prediction error, under some assumptions, is bounded by the generalized Jensen-Shannon divergence between distributions inspired by the results from \citet{shalitEstimatingIndividualTreatment2017}. Let $G^Y = F^Y \circ F^C$. The expected factual, counterfactual, treated, and control losses for the loss function $\mathcal{L}$ are defined as:

{
\small
\begin{equation}
    \errF(G^Y, \Phi) = \E_{\bm{H}, \bm{Y}, A}\left[ \mathcal{L}(G^Y(\Phi(\bm{H}), A), \bm{Y}[A]) \right]
\end{equation}
\begin{equation}
    \errCF(G^Y, \Phi) = \E_{\bm{H}, \bm{Y}, A}\left[\mathcal{L}(G^Y(\Phi(\bm{H}), 1-A), \bm{Y}[1-A])\right]
\end{equation}
\begin{equation}
    \errF_{(1)}(G^Y, \Phi) = \E_{\bm{H}, \bm{Y}|A=1}\left[\mathcal{L}(G^Y(\Phi(\bm{H}), 1), \bm{Y}[1])\right]
\end{equation}
\begin{equation}
    \errF_{(0)}(G^Y, \Phi) = \E_{\bm{H}, \bm{Y}|A=0}\left[\mathcal{L}(G^Y(\Phi(\bm{H}), 0), \bm{Y}[0])\right].
\end{equation}
}

\begin{restatable}{theorem}{errorBoundTheorem} \label{thm:error-bound}
Let $\Phi$ be an invertible representation function and $G^Y$ be a hypothesis. Let $S=\sup_{\bm{H}, A} \Big| \E_{\bm{Y}|\bm{H}=\bm{h},A=a}\left[(\mathcal{L}\left(G^Y(\Phi(\bm{h}), a), \bm{Y}[a]\right)\right]\Big|$. Consider the weights $\pi^{(0)}, \pi^{(1)} > 0$ where $\pi^{(0)} + \pi^{(1)} = 1$, let $W=2S/(\sqrt{\pi^{(0)}} \pi^{(1)} + \pi^{(0)}\sqrt{\pi^{(1)}})$, we then have:

{
\small
\begin{equation}
\begin{aligned}
    \errF(G^Y, \Phi) + \errCF (G^Y, \Phi) & \leq  \errF_{(0)}(G^Y, \Phi)+\errF_{(1)}(G^Y, \Phi) 
    \ifinappendix
    \else
    \\ 
    \fi
    & + W \sqrt{\gjsdiv{\pi^{(0)}, \pi^{(1)}}{P^{\Phi}_{(0)} \mid\mid P^{\Phi}_{(1)}}}   .  
\end{aligned}
\end{equation}
}
\end{restatable}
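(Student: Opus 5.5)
We follow the template of \citet{shalitEstimatingIndividualTreatment2017}: decompose the factual and counterfactual losses by treatment group, rewrite the counterfactual parts as integrals over the representation space, and bound their mismatch with the factual parts by an integral probability metric. We then control that metric by total variation and, in turn, by the generalized Jensen--Shannon divergence. Let $p_a = P(A=a)$ and $\ell_{G,\Phi}(\bm{h},a)=\E_{\bm{Y}|\bm{H}=\bm{h},A=a}[\mathcal{L}(G^Y(\Phi(\bm{h}),a),\bm{Y}[a])]$, so that $|\ell_{G,\Phi}|\le S$.

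\textbf{Step 1: decompose by treatment group.} Conditioning on $A$ gives
\[
\errF = p_1\errF_{(1)} + p_0\errF_{(0)}, \qquad \errCF = p_1\errCF_{(1)} + p_0\errCF_{(0)},
\]
where $\errCF_{(1)}$ evaluates arm $0$ on the treated population and $\errCF_{(0)}$ evaluates arm $1$ on the controls. Under ignorability (\cref{sec:assumptions}), $\ell$ does not depend on which arm the unit actually received. Since $\Phi$ is invertible, we can write $\ell$ as a function $\tilde\ell(r,a)$ of $r=\Phi(\bm{h})$ and change variables. This expresses each group loss as $\int \tilde\ell(r,a)\,dP^\Phi_{(a')}(r)$ for the appropriate $a,a'$.

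\textbf{Step 2: bound the factual/counterfactual gap.} The differences are
\[
\errCF_{(1)}-\errF_{(0)} = \int \tilde\ell(r,0)\,\bigl(dP^\Phi_{(1)}-dP^\Phi_{(0)}\bigr)(r),
\]
and symmetrically for $\errCF_{(0)}-\errF_{(1)}$. Because $|\tilde\ell|\le S$, each difference is at most $2S\,\tvdist{P^\Phi_{(0)},P^\Phi_{(1)}}$, or $S\|P^\Phi_{(0)}-P^\Phi_{(1)}\|_1$ depending on the TV convention. Summing, and using $p_0+p_1=1$ together with $\errF_{(a)}\ge 0$ to absorb the weights, we obtain
\[
\errF+\errCF \le \errF_{(0)}+\errF_{(1)} + c\,S\,\tvdist{P^\Phi_{(0)},P^\Phi_{(1)}}.
\]
The bookkeeping must be arranged so that the constant is $2S$ times the TV distance, which is what the stated form of $W$ requires.

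\textbf{Step 3: relate TV to the weighted JS divergence.} This is the main obstacle. Set $M=\pi^{(0)}P_{(0)}+\pi^{(1)}P_{(1)}$, so that $\gjsdiv{\pi^{(0)},\pi^{(1)}}{\cdot}=\pi^{(0)}\kldiv{P_{(0)}\|M}+\pi^{(1)}\kldiv{P_{(1)}\|M}$. Since $P_{(0)}-M=\pi^{(1)}(P_{(0)}-P_{(1)})$, Pinsker's inequality gives
\[
\kldiv{P_{(0)}\|M}\ge 2\,(\pi^{(1)})^2\,\tvdist{P_{(0)},P_{(1)}}^2,
\]
and similarly $\kldiv{P_{(1)}\|M}\ge 2\,(\pi^{(0)})^2\,\tvdist{\cdot}^2$.

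To recover the denominator $\sqrt{\pi^{(0)}}\pi^{(1)}+\pi^{(0)}\sqrt{\pi^{(1)}}$, we bound each square root separately rather than the sum. From the two Pinsker bounds,
\[
\tvdist{\cdot}\le \frac{\sqrt{\kldiv{P_{(0)}\|M}}}{\sqrt 2\,\pi^{(1)}}, \qquad \tvdist{\cdot}\le \frac{\sqrt{\kldiv{P_{(1)}\|M}}}{\sqrt 2\,\pi^{(0)}}.
\]
Multiply the first by $\sqrt{\pi^{(0)}}\pi^{(1)}$ and the second by $\pi^{(0)}\sqrt{\pi^{(1)}}$, then add:
\[
\bigl(\sqrt{\pi^{(0)}}\pi^{(1)}+\pi^{(0)}\sqrt{\pi^{(1)}}\bigr)\,\tvdist{\cdot}\le \tfrac{1}{\sqrt2}\Bigl(\sqrt{\pi^{(0)}\kldiv{P_{(0)}\|M}}+\sqrt{\pi^{(1)}\kldiv{P_{(1)}\|M}}\Bigr).
\]
By Cauchy--Schwarz (concavity of the square root), the right-hand side is at most $\sqrt{\gjsdiv{\pi^{(0)},\pi^{(1)}}{\cdot}}$. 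Hence
\[
\tvdist{\cdot}\le \frac{\sqrt{\mathrm{JS}}}{\sqrt{\pi^{(0)}}\pi^{(1)}+\pi^{(0)}\sqrt{\pi^{(1)}}}.
\]
Substituting into Step 2 with the constant $2S$ yields exactly $W\sqrt{\mathrm{JS}}$.

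The delicate points are matching the TV normalization between Steps 2 and 3 and the weight handling in Step 2. If the constants do not match, we will adjust Step 2 by bounding the two cross terms with $S$ each, in which case the total-variation constant becomes $2S$ directly.
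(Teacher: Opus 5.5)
Your proposal is correct and follows essentially the same route as the paper: a Shalit-style decomposition by treatment group, with the counterfactual--factual gap bounded by $2S\cdot\mathrm{TV}(P^{\Phi}_{(0)},P^{\Phi}_{(1)})$ (Lemma~\ref{lem:err-cf}), followed by exactly the Pinsker-plus-$(a+b\le\sqrt{2a^2+2b^2})$ argument of Lemma~\ref{lem:gjs}. Your bookkeeping is internally consistent: you pair arm~$0$ on the treated with $\errF_{(0)}$, and you invoke ignorability explicitly because $S$ conditions on $A=a$. The appendix, by contrast, silently switches to Shalit's convention for the counterfactual group losses. Your hedges are also unnecessary: the weights recombine exactly via $p_0+p_1=1$ without needing $\errF_{(a)}\ge 0$, and since both Step~2 and Pinsker use $\mathrm{TV}=\tfrac12\|\cdot\|_1$, the constant is exactly $2S$.
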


It can be difficult to bound $S$ without additional assumptions. The first option is bounded loss functions. Alternatively, if we assume the domain of $\bm{V}$ and $\bm{Z}$ is bounded, then $S$ is bounded. In many applications, such as health care, we can typically assume naturally bounded distributions (e.g., heart rates, etc.). 

\subsection{Treatment-specific conditioning} \label{sec:treatment-specific}

By design, the conditioning layer $F^C$ should learn a treatment-specific transformation of the representation $\Phi(\mathcal{H}_t)$. However, during training, only the observable outcome corresponding to the factual treatment $A_{t+1}=a^{(k)}$ is optimized. Since the counterfactual outcomes cannot be observed, the conditioning with respect to remaining treatments $A_{t+1}\neq a^{(k)}$ does not receive any training signal. 

However, by leveraging the same strategy for next outcome prediction, we synthetically applied the conditioning on the representation $\Phi(\mathcal{H}_t)$ for all remaining treatments. Reusing the treatment classifier $F^A$, for treatment $a^{(c)} \neq a^{(k)}$, we optimize:

{
\small
\begin{equation} \label{eq:conditioning-loss}
    \begin{gathered}
        \mathcal{L}^{C}_{t, (c)} (\theta_{F^A}, \theta_{F^C},\theta_{\Phi}) \\ = - \sum_{j=1}^K \mathbb{I}_{[a^{(c)}=a^{(j)}]} \log F^A_{j}(F^C(\Phi(\mathcal{H}_t), a^{(c)})).
    \end{gathered}
\end{equation}
}

The final treatment-conditioning loss is simply the average of cross-entropy over all counterfactual treatments:

{
\small
\begin{equation} \label{eq:loss-pred-condtioning}
    \mathcal{L}_{t}^C (\theta_{F^A}, \theta_{F^C}, \theta_{\Phi}) =\frac{1}{K-1} \sum_{c \neq k} \mathcal{L}_{t, (c)}^C (\theta_{F^A}, \theta_{F^C}, \theta_{\Phi}).
\end{equation}
}

Combining the treatment-conditioning loss $\mathcal{L}^C_t$ with the next outcome prediction loss $\mathcal{L}_t^Y$, for every sample in every time step, all possible configurations of the conditioning layer $F^C$ are optimized. While the $\mathcal{L}^C_t$ loss is easier to optimize compared to the $\mathcal{L}_t^Y$ loss, we observe that the additional training signal helps improve counterfactual estimation. Further, during training, to prevent overfitting to the treatment-conditioning loss, it is beneficial to add label smoothing \cite{mullerWhenDoesLabel2019}, which is equivalent to replacing $\mathbb{I}_{[a^{(c)}=a^{(j)}]}$ with $\mathbb{I}_{[a^{(c)}=a^{(j)}]}(1-\alpha)+\mathbb{I}_{[a^{(c)}\neq a^{(j)}]}\frac{\alpha}{K}$ in \eqref{eq:conditioning-loss} for some hyperparameter $\alpha$.

\subsection{Autoregressive decoding}

\begin{figure}[!ht]
    \centering
    \includegraphics[width=0.75\linewidth]{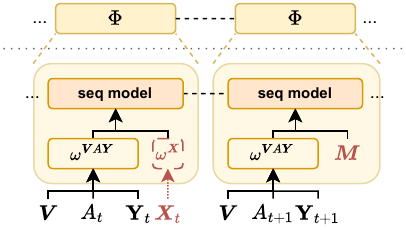}
    \caption{To handle input mismatch of the history $\mathcal{H}_{T_0}=\{\bm{V}, A_{t}, \bm{Y}_{t}, \bm{X}_{t}\}_{t=1}^{T_0}$ and autoregressively decoded sequences $\{\bm{V}, A_{t}, \bm{Y}_{t}\}_{t=T_0 +1}^{T_0+\tau}$, we replace the future time-varying covariates $\{\bm{X}\}_{T_0+1}^{T_0+\tau}$ with a learnable vector $\bm{M}$.}
    \vspace{-5pt}
    \label{fig:missing}
\end{figure}

At inference time, the predicted outcomes $\hat{\bm{Y}}_{t+1}$ are autoregressively reused to predict the next outcomes. However, if the dataset includes time-varying covariates $\bm{X}_t$, the network has a mismatched input dimension for future time steps as $\bm{X}_{t+1}$ is not available. For certain types of data, a straightforward solution is also learning to decode $\bm{X}_{t+1}$.

However, one of the potential problems with decoding $\bm{X}_t$ is that it can be difficult for many scenarios. Forcing the neural network to decode $\bm{X}_{t+1}$ in those cases may waste capacity on a less useful task. Existing works handle the problem by either using an encoder-decoder architecture \cite{limForecastingTreatmentResponses2018, bicaEstimatingCounterfactualTreatment2019, bouchattaouiCausalContrastiveLearning2024} or masking the corresponding attention of the decoder-only transformer \cite{melnychukCausalTransformerEstimating2022}. We introduce a dropout-like model-agnostic approach called temporal cutoff that can be applied to a decoder-only model as an alternative to the encoder-decoder architecture in prior works.

\begin{figure*}[!ht]
    \centering
    \includegraphics[width=\linewidth]{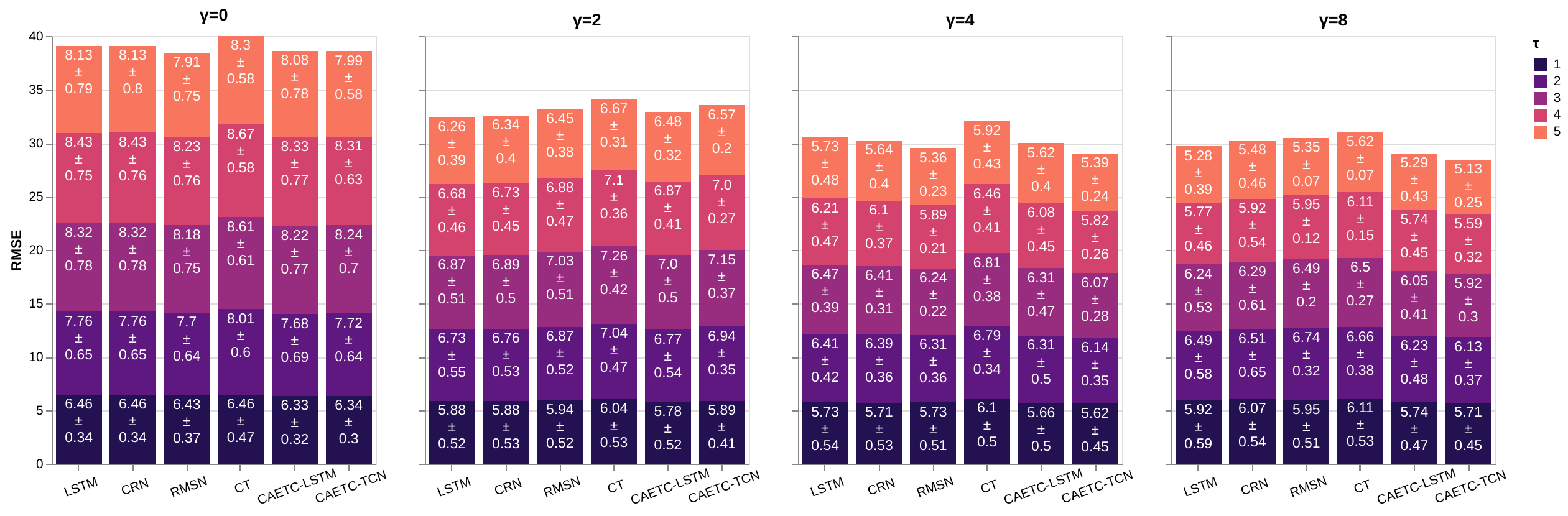}
    \caption{RMSEs for NSCLC fully synthetic data on random trajectories with increasing levels of time-dependent confounding on the training dataset. Each bar represents the RMSE for 5-step-ahead predictions. Lower is better.}
    \vspace{-5pt}
    \label{fig:nsclc-result-cf}
\end{figure*}
\begin{figure*}[!ht]
    \centering
    \includegraphics[width=\linewidth]{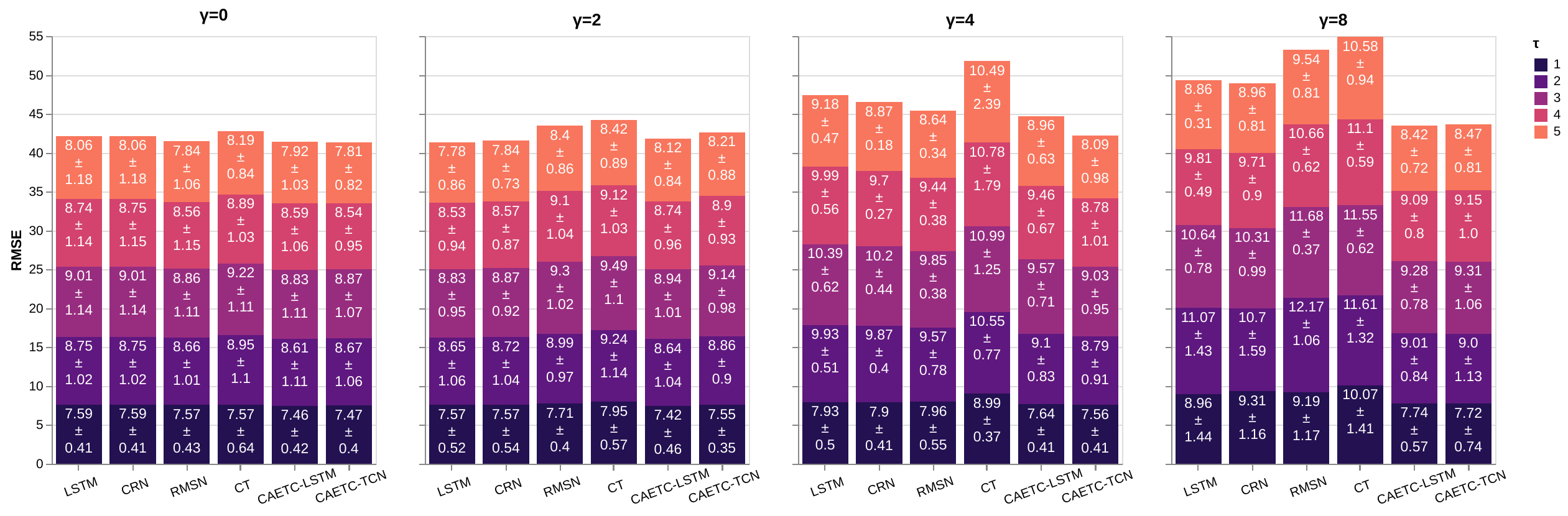}
    \caption{RMSEs for NSCLC fully synthetic data on no-confounding test set ($\gamma=0$) with increasing levels of time-dependent confounding on the training dataset. Training on $\gamma=0$ and testing on $\gamma=0$ represents the performance upper bound for each method. Due to time-dependent confounding, training on $\gamma \neq 0$ and testing on $\gamma = 0$ is expected to reduce performance. Lower is better.}
    \vspace{-5pt}
    \label{fig:nsclc-result-ood}
\end{figure*}

\textbf{Temporal cutoff.} For each training data point, during training, all timesteps $t \geq T_{\mathrm{cut}} \sim \mathrm{Uniform}(1, T)$ are dropped. To represent the value of the dropped timesteps, a learnable missingness vector $\bm{M}$ is used. We apply the temporal cutoff to only $\bm{X}$ if it exists. To allow more capacity to represent missingness, we first linearly project $\bm{X}_t \in \mathbb{R}^{u}$ to a larger dimension using $\omega^{\bm{X}}$ so that $\omega^{\bm{X}}(\bm{X}_t) \in \mathbb{R}^{v}$ where $v>u$. For non-cutoff time steps, $\omega^{\bm{X}}(\bm{X}_t) \in \mathbb{R}^{v}$ is used, while for cutoff time steps, the missingness vector $\bm{M} \in \mathbb{R}^{v}$ is used as the input to the sequence model as in \cref{fig:missing}. We also linearly project $[\bm{V}, A, \bm{Y}]$ using $\omega^{\bm{\bm{V}, A, \bm{Y}}}$ so that $\omega^{\bm{X}}(\bm{X}_t)$ do not dominate the input dimension. For simplicity, we let $\omega^{\bm{X}}$ and $\omega^{\bm{\bm{V}, A, \bm{Y}}}$ projects to the same dimension, i.e., $\omega^{\bm{\bm{V}, A, \bm{Y}}}([\bm{X}, A, \bm{Y}]) \in \mathbb{R}^v$.

\subsection{Overall training objective}

The overall training objective is:

{
\small
\begin{gather}
    \min \ \delta^E \mathcal{L}_{t}^B  (\theta_{F^B}) \\
    \begin{aligned}
        \min\ & \mathcal{L}_t^R(\theta_{F^A}, \theta_{F^Y}, \theta_{F^X}, \theta_\Phi) + \delta^{E} \mathcal{L}^E_t (\theta_{\Phi})\\
        + & \mathcal{L}^Y_t (\theta_{F^Y}, \theta_{F^C}, \theta_{\Phi}) 
        + \delta^A \mathcal{L}^C_t (\theta_{F^A}, \theta_{F^C}, \theta_{\Phi})         
    \end{aligned}
\end{gather}
}

with additional hyperparameters $\delta^E$. Note that we reuse the hyperparameters $\delta^A$ from \cref{eq:reconstruction} to reduce hyperparameter search efforts. 

\section{Experiments} \label{sec:exp}

\subsection{Baselines}
We evaluate the performance of \method{} against state-of-the-art methods for counterfactual estimation over time, including RMSN \citep{limForecastingTreatmentResponses2018}, CRN \citep{bicaEstimatingCounterfactualTreatment2019}, and CT \citep{melnychukCausalTransformerEstimating2022}. Since our method is model-agnostic, we implement two variants of {\method} using LSTM \citep{hochreiterLongShortTermMemory1997} and TCN \citep{baiEmpiricalEvaluationGeneric2018}, which we refer to as \method{}-LSTM and \method{}-TCN, respectively. We also provide a baseline using classical LSTM without balancing, which is equivalent to CRN with no adversarial training. Effective methods for counterfactual estimations should demonstrate performance improvement over classical LSTM. 

\begin{table*}[ht]
\centering
\caption{RMSEs on MIMIC-III semi-synthetic data with counterfactual random trajectories. Lower is better.}
\scriptsize
\begin{tabular}{c|c|ccccccccc|c} 
\toprule
& $\tau=1$ & $\tau=2$ & $\tau=3$ & $\tau=4$ & $\tau=5$ & $\tau=6$ & $\tau=7$ & $\tau=8$ & $\tau=9$ & $\tau=10$ & Avg \\ 
\midrule
\midrule
LSTM & 
\result{.363}{.028} &
\result{.417}{.025} &
\result{.463}{.021} &
\result{.506}{.020} &
\result{.546}{.017} &
\result{.581}{.016} &
\result{.615}{.016} &
\result{.649}{.015} &
\result{.682}{.016} &
\result{.715}{.017} &
.554 \\
\midrule
RMSN &
\result{.408}{.029} &
\result{.471}{.032} &
\result{.530}{.030} &
\result{.587}{.030} &
\result{.641}{.030} &
\result{.690}{.030} &
\result{.737}{.028} &
\result{.782}{.025} &
\result{.825}{.022} &
\result{.867}{.023} &
.654
\\
CRN & 
\result{.365}{.028} &
\result{.424}{.027} &
\result{.476}{.023} &
\result{.524}{.021} &
\result{.569}{.019} &
\result{.608}{.018} &
\result{.647}{.018} &
\result{.685}{.018} &
\result{.723}{.020} &
\result{.760}{.022} &
.578 \\
CT & 
\result{.324}{.017} &
\result{.411}{.024} &
\result{.499}{.035} &
\result{.582}{.040} &
\result{.660}{.041} &
\result{.733}{.043} &
\result{.808}{.046} &
\result{.882}{.046} &
\result{.941}{.048} &
\result{.999}{.053} &
.684 \\
\midrule
\methodcell{LSTM} & 
\result{.330}{.021} &
\result{.393}{.019} &
\result{.442}{.014} &
\result{.487}{.009} &
\result{.528}{.006} &
\result{.564}{.007} &
\result{.600}{.011} &
\result{.634}{.014} &
\result{.668}{.017} &
\result{.702}{.020} &
.535 \\
\midrule
\methodcell{TCN} & 
\result{.322}{.016} &
\result{.391}{.019} &
\result{.447}{.023} &
\result{.495}{.027} &
\result{.540}{.029} &
\result{.582}{.030} &
\result{.621}{.032} &
\result{.657}{.032} &
\result{.692}{.032} &
\result{.726}{.032} &
.547 \\
\bottomrule
\end{tabular}
\label{tab:mimic-synthetic-result}
\end{table*}

\subsection{Datasets}

A challenge in evaluating counterfactual estimation methods is the lack of counterfactual outcomes in real-world datasets. Therefore, we use synthetic data to obtain counterfactual ground truths. To ensure comprehensive evaluation, we benchmark the performance against standard benchmarks used in prior works \citep{limForecastingTreatmentResponses2018, bicaEstimatingCounterfactualTreatment2019, melnychukCausalTransformerEstimating2022}, including fully synthetic data based on non-small cell lung cancer simulation \cite{gengPredictionTreatmentResponse2017} and semi-synthetic data derived from the MIMIC-III dataset \cite{johnsonMIMICIIIFreelyAccessible2016}. We also evaluate \method{} on factual outcomes of real-world data from the MIMIC-III dataset to demonstrate the performance in practical usage. More details are available in the Appendix.

\subsection{Experiment with synthetic data} \label{sec:exp-nsclc}

The fully synthetic data is built upon the pharmacokinetics-pharmacodynamics model of non-small cell lung cancer (NSCLC) \citep{gengPredictionTreatmentResponse2017} under chemotherapy and radiotherapy treatment. Details of the simulation are in \cref{sec:nsclc}. We inject different levels of confounding during simulation, ranging from no confounding $\gamma=0$ to strong confounding $\gamma=8$. We set the maximum trajectory length $T$ to 60 and the prediction horizon $\tau$ to 5. We trained all methods on the trajectories of 10000 patients with a separate validation set of 1000 patients to select the best hyperparameters.  All methods are evaluated under two scenarios: random trajectories and no-confounding. Further details on the test setting are available in the \cref{sec:nsclc}.

\textbf{Random trajectories}. At every time step, there are $K$ possible outcomes. Therefore, at prediction horizon $\tau$, there are an exponential $K^{\tau}$ outcomes, making simulating every outcome prohibitively expensive. We randomly sample a fixed subset of $k$ outcomes for evaluation. The setting shares similarities with \citet{melnychukCausalTransformerEstimating2022}. We set $k=1$ and simulate for 1000 patients, resulting in 1000 test horizons per time step $t$. 

\textbf{No confounding}. As $\gamma$ can be controlled, we evaluate the model on a non-confounded ($\gamma=0$) test set to investigate whether the model has learned unbiased estimation. This setting is more difficult than random trajectories, as not only do the prediction horizons not follow the confounded trajectories, but the input history also does not resemble the confounded training set. The setting is similar to \citet{limForecastingTreatmentResponses2018}. We use the trajectories of 1000 patients.  

\textbf{Results.} \method-LSTM and \method-TCN demonstrate strong performance with increasing time-dependent confounding as in \cref{fig:nsclc-result-cf} and \cref{fig:nsclc-result-ood}. For all methods, training and testing on non-confounded datasets $\gamma=0$ in \cref{fig:nsclc-result-ood} results in a very small difference in performance, which is desirable. Due to the limited amount of data, similar to real-world scenarios, LSTM and TCN architectures exhibit good performance compared to the transformer architecture. For small confounding, regular LSTM is more stable and demonstrates strong performance compared to counterfactual methods. However, as the confounding factor increases, only \method-LSTM and \method-TCN exhibit stable counterfactual estimation performance. Prior works (CRN, CT) do not outperform a vanilla LSTM with empirical risk minimization due to covariate information loss (CRN, CT), which is consistent with prior literature \citep{huangEmpiricalExaminationBalancing2024}. Our method uses a partial-autoencoding architecture, which is less likely to suffer from the same issues. Furthermore, it can be observed that under the more challenging setting of \cref{fig:nsclc-result-ood}, {\method} demonstrate strong improvement over existing works.

\subsection{Experiment with semi-synthetic data}

We employ a semi-synthetic dataset generated based on the MIMIC-III \cite{johnsonMIMICIIIFreelyAccessible2016} intensive care units dataset. Details of the simulation are in \cref{sec:mimic-synthetic}.  We set the maximum trajectory length $T$ to 100 and the prediction horizon $\tau$ to 10. All methods are trained and validated on 2000 and 200 trajectories, respectively. We evaluate the method on random trajectories ($k=1$) similar to \cref{sec:exp-nsclc}, which includes 200 patients, resulting in 200 test horizons per time step $t$.

\textbf{Results.} Similar to the synthetic experiment, \method-LSTM and \method-TCN demonstrate strong improvement across all time steps as in \cref{tab:mimic-synthetic-result}.

\subsection{Experiment with real-world data}

We further evaluate {\method} on real-world data based on the MIMIC-III dataset \cite{johnsonMIMICIIIFreelyAccessible2016}. However, for real-world data, counterfactual outcomes are not available. Nonetheless, based on \cref{thm:error-bound}, we can expect that a smaller sum of treated and control error indicates a tighter error bound. Therefore, the performance on observable outcomes is still a useful evaluation metric. We set the maximum trajectory length $T$ to 60 and the prediction horizon $\tau$ to 5. All methods are trained, validated, and tested on 5000, 500, and 500 trajectories sampled from the dataset, respectively.

\textbf{Results.} Similar to previous experiments, \method-LSTM and \method-TCN achieve strong improvement compared to existing methods as in \cref{tab:mimic-real}.

{\setlength{\tabcolsep}{3pt}
\begin{table}[!ht]
\centering
\caption{RMSEs on MIMIC-III real-world data. Lower is better.}
\scriptsize
\begin{tabular}{c|c|cccc|c}
\toprule
& $\tau=1$ & $\tau=2$ & $\tau=3$ & $\tau=4$ & $\tau=5$ & Avg \\
\midrule
\midrule
LSTM & 
\result{6.80}{0.02} &
\result{7.32}{0.05} &
\result{7.61}{0.05} &
\result{7.82}{0.05} &
\result{7.99}{0.07} &
7.51 \\
\midrule
RMSN & 
\result{7.29}{0.07} &
\result{7.80}{0.07} &
\result{8.10}{0.10} &
\result{8.33}{0.16} &
\result{8.54}{0.21} &
8.01 \\
CRN & 
\result{6.81}{0.04} &
\result{7.35}{0.05} &
\result{7.64}{0.06} &
\result{7.86}{0.07} &
\result{8.04}{0.10} &
7.54 \\
CT &
\result{6.70}{0.04} &
\result{7.37}{0.06} &
\result{7.71}{0.07} &
\result{7.93}{0.07} &
\result{8.10}{0.04} &
7.56\\
\midrule
\methodcell{LSTM} &
\result{6.72}{0.05} &
\result{7.27}{0.05} &
\result{7.56}{0.06} &
\result{7.76}{0.06} &
\result{7.91}{0.08} &
7.45 \\
\midrule
\methodcell{TCN} & 
\result{6.69}{0.06} &
\result{7.24}{0.05} &
\result{7.53}{0.04} &
\result{7.72}{0.03} &
\result{7.88}{0.06} &
7.41 \\
\bottomrule
\end{tabular}
\label{tab:mimic-real}
\end{table}
}

\subsection{Ablation}

We further examined the performance contribution of each component in {\method} using an LSTM backbone. We consider the following variations: (1) \method-LSTM, which achieves state-of-the-art performance; (2) \method-LSTM ($\delta^C=0$), which removes treatment conditioning loss $\mathcal{L}^C$ (\cref{sec:treatment-specific}); and (3) \method-LSTM ($\delta^C=\delta^E=0$), which further removes adversarial entropy maximization (\cref{sec:treatment-invariant}). Variation (3) is the basic version, which is only a partial-autoencoding network with treatment conditioning.

\textbf{Results.} Observe that each loss contributes to the overall performance. \method-LSTM ($\delta^C = \delta^E = 0$) is more biased over the prediction horizon compared to adversarially-trained \method-LSTM ($\delta^C=0$). Due to the partial-autoencoding, {\method} can balance the representation with less covariate information loss. 

{\setlength{\tabcolsep}{3pt}
\begin{table}[!ht]
\centering
\caption{Ablation with the same setting of \cref{fig:nsclc-result-ood} with $\gamma=8$.}
\scriptsize
\begin{tabular}{c|ccccc|c}
\toprule
\methodcell{LSTM} & $\tau=1$ & $\tau=2$ & $\tau=3$ & $\tau=4$ & $\tau=5$ & Avg \\ 
\midrule
\midrule
Full & 
\result{7.74}{0.57} &
\result{9.01}{0.84} &
\result{9.28}{0.78} &
\result{9.09}{0.8} &
\result{8.42}{0.72} &
8.71 \\
\midrule
$\delta^C=0$ & 
\result{7.75}{0.57} &
\result{9.09}{0.86} &
\result{9.45}{0.87} &
\result{9.37}{0.98} &
\result{8.79}{1.04} &
8.89    \\
\midrule
\makecell{$\delta^C=0$ \vspace{-2pt} \\ $\delta^{E}=0$} & 
\result{8.01}{0.91} &
\result{9.58}{1.31} &
\result{10.06}{1.44} &
\result{10.05}{1.61} &
\result{9.45}{1.49} &
9.43 \\
\bottomrule
\end{tabular}
\label{tab:ablation}
\end{table}
}

\section{Conclusion}

We introduced a novel method for counterfactual estimation over time called {\methodlower}. {\method} utilizes a partial autoencoding architecture for representation invertibility and casts outcome prediction as treatment conditioning on the adversarially balanced representation. Extensive empirical experiments show that {\method} improves significantly over baselines. 

\section*{Acknowledgements}

This work is supported by a VinUni-Illinois Smart Health Center grant (PI: Lav R. Varshney). Nghia D. Nguyen is supported by the Vingroup Science and Technology Scholarship Program for Overseas Study for Master’s and Doctoral Degrees.

\section*{Impact Statement}

This paper proposes a novel method for counterfactual estimation over time, which can be beneficial for sequential decision-making systems in domains such as healthcare, economics, or public policy. Improved counterfactual and treatment effect estimation can reduce the bias when learning from observational data. However, misuse or incorrect estimates can result in potentially harmful outcomes, so careful evaluation is required in sensitive applications.

\clearpage

\bibliography{references}
\bibliographystyle{icml2026}

\newpage

\appendix
\onecolumn

\section{Assumptions for causal identification} \label{sec:assumptions}

\subsection{Assumptions}

We make the standard assumptions for causal identification \citep{Robins2008EstimationOT} that are also used in related methods \citep{bicaEstimatingCounterfactualTreatment2019, melnychukCausalTransformerEstimating2022, bouchattaouiCausalContrastiveLearning2024}.

\textbf{Consistency}: If $A_{\leq t}=a_{\leq t}$ is a sequence of treatments for a given unit, then the potential outcome is equivalent to the observed outcome or $\bm{Y}_{t}[a_{\leq t}]=\bm{Y}_{t}$.

\textbf{Sequential Positivity}: For every time step, there is a strictly positive probability to observe any treatment level, given a realization of the history. Formally, if $P(\mathcal{H}_t = h_t)>0$, then $0 < P(A_{t+1} = a_{t+1}| \mathcal{H}_t = h_{t})$ for all $a_{t+1}$.

\textbf{Sequential Ignorability}: For every time step, the treatment assignment $A_{t+1}$ is independent of potential outcomes, conditioned on the history, or $\bm{Y}_{\geq t+1}[a_{\geq t+1}] \perp A_{t+1} | \mathcal{H}_t$ for all $a_{\geq t+1}$.

\section{Proof of \cref{thm:global-optimum}}

The following proofs are adapted from CRN \citep{bicaEstimatingCounterfactualTreatment2019} and CT \citep{melnychukCausalTransformerEstimating2022}.

\begin{lemma} \label{lem:optima-ga}
For a fixed representation network $\Phi$, let $\bm{R}=\Phi(\bm{H})$, then the optimal predictor $F^B$ is:
{
\small
\begin{equation} \label{eq:optimal-ga}
    F^{B*}_{j}(\bm{R}) = \frac{P^A_{(j)} P^\Phi_{(j)}(\bm{R})}{\sum_{k=1}^K P^A_{(k)} P^\Phi_{(k)}(\bm{R})}.
\end{equation}
}
\end{lemma}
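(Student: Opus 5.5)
We rewrite the balancing-head objective in \eqref{eq:adv-game} as an expectation over the representation alone and then optimize pointwise. By the law of total expectation, $\sum_j P^A_{(j)} \E_{\bm{R}\mid A=a^{(j)}}[\cdot]$ is an integral against the joint law of $(\bm{R},A)$. Assume the $P^\Phi_{(j)}$ have densities with respect to a common dominating measure $\mu$; for instance, take $\mu=\sum_k P^\Phi_{(k)}$ and Radon--Nikodym derivatives. The loss of $F^B$ then becomes
\[
-\int \sum_{j=1}^K P^A_{(j)} P^\Phi_{(j)}(\bm{r}) \log F^B_j(\bm{r}) \, d\mu(\bm{r}).
\]
Here $F^B$ is an arbitrary measurable map into the probability simplex; this is the constraint $\sum_j F^B_j=1$, together with $F^B_j \ge 0$. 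Because of this, minimizing the integral reduces to minimizing the integrand separately at each $\bm{r}$.

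Fix $\bm{r}$ with $m(\bm{r}) := \sum_k P^A_{(k)} P^\Phi_{(k)}(\bm{r}) > 0$ and set $w_j = P^A_{(j)} P^\Phi_{(j)}(\bm{r})/m(\bm{r})$. The pointwise problem is then to minimize $-\sum_j w_j \log p_j$ over the simplex, after dropping the positive factor $m(\bm{r})$. We write
\[
-\sum_j w_j \log p_j = H(w) + \kldiv{w \mid\mid p}.
\]
Gibbs' inequality gives $\kldiv{w\mid\mid p}\ge 0$, with equality iff $p=w$. Hence the unique minimizer is $p_j=w_j$, which is exactly \eqref{eq:optimal-ga}. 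A Lagrange-multiplier argument gives the same conclusion: stationarity yields $w_j/p_j=\lambda$, and normalization forces $\lambda=1$. Strict convexity of $-\log$ shows this stationary point is the global minimizer.

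The main obstacle is measure-theoretic rather than computational. On the set $\{m=0\}$ the integrand vanishes $\mu$-a.e., so $F^B$ can be chosen arbitrarily there. The optimal predictor is therefore unique only $\mu$-almost everywhere on the support of the mixture, and we state the lemma with that qualification. We also need to justify exchanging the minimization with the integral. A pointwise minimizer that is measurable, namely the explicit formula, attains the pointwise infimum everywhere. Any other $F^B$ then has integrand at least as large everywhere, so its loss is at least as large, with strict inequality unless it agrees a.e. The degenerate case where the loss is infinite is handled by noting that $F^{B*}$ gives finite loss $H(A\mid\bm{R}) \le \log K$.
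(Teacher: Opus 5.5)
Your proof is correct and follows the same route as the paper: write the balancing-head loss as $-\int \sum_j P^A_{(j)}P^\Phi_{(j)}(\bm r)\log F^B_j(\bm r)\,d\bm r$ and minimize it pointwise over the simplex, where the paper solves the Lagrange stationarity condition $F^B_j(\bm r)=P^A_{(j)}P^\Phi_{(j)}(\bm r)/\lambda$ and then normalizes. Your Gibbs-inequality step (cross-entropy $=H(w)+\kldiv{w \mid\mid p}$) adds what the paper leaves implicit: that this point is the unique global minimizer, including at coordinates where $P^A_{(j)}P^\Phi_{(j)}(\bm r)=0$ and the interior stationarity condition cannot hold. Your almost-everywhere uniqueness and finite-loss remarks are also appropriate qualifications.
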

\begin{proof}
The objective for the predictor $F^B$ in \eqref{eq:adv-game} is:

{
\small
\begin{equation}
    F^{B*} = \arg \min_{F^B} - \sum_{j=1}^K \int \log \left( F^B_j(\bm{r})\right)P^\Phi_{(j)}(\bm{r}) P^A_{(j)} d\bm{r} \quad \text{subject to} \quad \sum_{j=1}^K F^B_j(\bm{r}) =1.
\end{equation}
}

Minimizing the objective point-wise and applying the Lagrange multiplier:

{
\small
\begin{equation}
    F^{B*} = \arg \min_{F^B} - \sum_{j=1}^K \log \left( F^B_j(\bm{r})\right)P^\Phi_{(j)}(\bm{r}) P^A_{(j)} + \lambda \left(\sum_{j=1}^K F^B_j(\bm{r}) -1\right).
\end{equation}
}

Taking the derivative with respect to $F^B_j(\bm{r})$ and setting to 0:

{
\small
\begin{equation}
    F^{B*}_j (\bm{r}) = \frac{ P^A_{(j)}P^\Phi_{(j)}(\bm{r})}{\lambda}.
\end{equation}
}

Solve for $\lambda$ by the constraint $\sum_{j=1}^K F^B_j(\bm{r}) =1$ to obtain \eqref{eq:optimal-ga}.
\end{proof}

\globalOptimumTheorem*

\begin{proof}
The objective for representation network $\Phi$ in \eqref{eq:adv-game} with $P^{\Phi}(\bm{r})=P(\Phi(\bm{h}))$ is
{
\small
\begin{equation}
    \Phi^{*} = \arg \min_{\Phi} \sum_{j=1}^K \int  F^B_j(\bm{r}) \log \left( F^B_j(\bm{r}) \right) P^{\Phi}(\bm{r}) d\bm{r}.
\end{equation}
}
Use the optimal predictor obtained from \cref{lem:optima-ga} and $P^{\Phi}(\bm{r}) = \sum_{j=1}^{K} P^A_{(j)} P^\Phi_{(j)}(\bm{r})$:

{
\begingroup
\allowdisplaybreaks
\small
\begin{align}
    \Phi^{*} &= \arg \min_{\Phi} \sum_{j=1}^K \int  \frac{P^A_{(j)} P^\Phi_{(j)}(\bm{r})}{\sum_{k=1}^K P^A_{(k)} P^\Phi_{(k)}(\bm{r})} \log \left(\frac{P^A_{(j)} P^\Phi_{(j)}(\bm{r})}{\sum_{k=1}^K P^A_{(k)} P^\Phi_{(k)}(\bm{r})} \right) \sum_{k=1}^{K} P^A_{(k)} P^\Phi_{(k)}(\bm{r}) d\bm{r} \\
    &= \arg \min_{\Phi} \sum_{j=1}^K \int  P^A_{(j)} P^\Phi_{(j)}(\bm{r}) \log \frac{P^A_{(j)} P^\Phi_{(j)}(\bm{r})}{\sum_{k=1}^K P^A_{(k)} P^\Phi_{(k)}(\bm{r})} d\bm{r} \\
    &= \arg \min_{\Phi} \left( \sum_{j=1}^K  \int  P^A_{(j)} P^\Phi_{(j)}(\bm{r}) \log \frac{P^\Phi_{(j)}(\bm{r})}{\sum_{k=1}^K P^A_{(k)} P^\Phi_{(k)}(\bm{r})} d\bm{r}  + \sum_{j=1}^K  \int  P^A_{(j)} P^\Phi_{(j)}(\bm{r} )  \log P^A_{(j)} d\bm{r} \right) \\
    &= \arg \min_{\Phi} \left( \sum_{j=1}^K P^A_{(j)} \int   P^\Phi_{(j)}(\bm{r}) \log \frac{P^\Phi_{(j)}(\bm{r})}{\sum_{k=1}^K P^A_{(k)} P^\Phi_{(k)}(\bm{r})} d\bm{r}  + \underbrace{\sum_{j=1}^K P^A_{(j)} \log P^A_{(j)} \underbrace{\int  P^\Phi_{(j)}(\bm{r}) d\bm{r}}_{=1}}_{=C} \right) \\
    &= \arg \min_{\Phi} \sum_{j=1}^K P^A_{(j)} \int   P^\Phi_{(j)}(\bm{r}) \log \frac{P^\Phi_{(j)}(\bm{r})}{\sum_{k=1}^K P^A_{(k)} P^\Phi_{(k)}(\bm{r})} d\bm{r} \\
    &= \arg \min_{\Phi} \sum_{j=1}^K P^A_{(j)} \kldiv{P^\Phi_{(j)} \mid\mid \sum_{k=1}^K P^A_{(k)} P^\Phi_{(k)}} \\
    &= \arg \min_{\Phi} \gjsdiv{P^A_{(0)}, ..., P^A_{(k)}}{P^\Phi_{(0)}, ..., P^\Phi_{(k)}}
\end{align}
\endgroup
}

where $\gjsdiv{P^A_0, ..., P^A_{(k)}}{P^\Phi_{(0)}, ..., P^\Phi_{(k)}}$ is the generalized Jensen-Shannon divergence with weight $P^A_{(0)}, ..., P^A_{(k)}$ for distributions $P^\Phi_{(0)}, ..., P^\Phi_{(k)}$. The divergence is non-negative and equals zero if and only if all distributions are identical, which satisfy \eqref{eq:treatment-invariant}.

\end{proof}

\section{Proof of \cref{thm:error-bound}}

We restate \citet[Lemma A4]{shalitEstimatingIndividualTreatment2017} with a specific IPM metric, which is the total variation distance between two distributions $P$ and $Q$, denoted as $\tvdist{P \mid\mid Q} = \frac{1}{2}\int |P(\bm{x}) - Q(\bm{x})| d\bm{x}$. 

\begin{lemma} \label{lem:err-cf}
Let $\Phi: \domain{\bm{H}} \to \domain{\bm{R}}$ be an invertible representation. Consider the supremum $S=\sup_{\bm{H}, A} \Big|\E_{\bm{Y}|\bm{H}=\bm{h},A=a}\left[(\mathcal{L}(G^Y(\Phi(\bm{h}), a), \bm{Y}[a])\right] \Big|$, let $u = P^A_{(1)}$, we then have:

{
\small
\begin{equation}
    \errCF(G^Y, \Phi) \leq (1-u) \errF_{(1)}(G^Y, \Phi) + u \errF_{(0)} (G^Y, \Phi) + 2 S \cdot \tvdist{P^{\Phi}_{(0)} \mid\mid P^{\Phi}_{(1)}}.
\end{equation}
}

\end{lemma}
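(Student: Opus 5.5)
We follow the argument of \citet[Lemma A4]{shalitEstimatingIndividualTreatment2017}, specialized to the total variation distance. First we decompose the counterfactual error by treatment group: with $u=P^A_{(1)}$,
\[
\errCF(G^Y,\Phi)=(1-u)\,\errCF_{(0)}+u\,\errCF_{(1)},
\]
where $\errCF_{(0)}$ is the expected loss of predicting $\bm{Y}[1]$ on units with $A=0$, and $\errCF_{(1)}$ is the expected loss of predicting $\bm{Y}[0]$ on units with $A=1$.

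Next we define the pointwise loss $\ell_a(\bm{h})=\E_{\bm{Y}\mid\bm{H}=\bm{h}}\bigl[\mathcal{L}(G^Y(\Phi(\bm{h}),a),\bm{Y}[a])\bigr]$. Sequential ignorability gives $\bm{Y}[a]\perp A\mid\bm{H}$, so this conditional expectation does not depend on the observed treatment. Two identities follow:
\[
\errCF_{(0)}=\int \ell_1(\bm{h})\,p(\bm{h}\mid A=0)\,d\bm{h},\qquad \errF_{(1)}=\int \ell_1(\bm{h})\,p(\bm{h}\mid A=1)\,d\bm{h}.
\]
Because $\Phi$ is invertible, we change variables to $\bm{r}=\Phi(\bm{h})$. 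Let $\Psi=\Phi^{-1}$. The Jacobian factors cancel, since they are identical for the two conditionals. This gives $\errCF_{(0)}=\int \ell_1(\Psi(\bm{r}))P^\Phi_{(0)}(\bm{r})\,d\bm{r}$ and $\errF_{(1)}=\int \ell_1(\Psi(\bm{r}))P^\Phi_{(1)}(\bm{r})\,d\bm{r}$.

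Subtracting the two integrals and bounding $|\ell_1|\le S$ yields
\[
\errCF_{(0)}-\errF_{(1)}\le S\int |P^\Phi_{(0)}-P^\Phi_{(1)}|\,d\bm{r}=2S\,\tvdist{P^{\Phi}_{(0)}\mid\mid P^{\Phi}_{(1)}}.
\]
The same argument with the roles of the treatments swapped gives $\errCF_{(1)}\le \errF_{(0)}+2S\,\tvdist{P^{\Phi}_{(0)}\mid\mid P^{\Phi}_{(1)}}$. Multiplying these bounds by $(1-u)$ and $u$ and adding, the TV coefficients sum to $2S$. This gives exactly the claimed inequality.

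The one delicate step is the invariance of $\ell_a$ across treatment groups. It requires ignorability conditioned on $\bm{H}$, and invertibility of $\Phi$ is what lets us condition on $\bm{R}$ equivalently. Without invertibility the change of variables fails, because $\ell_a$ would not be a function of $\bm{r}$. Everything else is routine. Using TV, a bounded-function IPM, instead of a general IPM avoids any need for a function-class membership argument.
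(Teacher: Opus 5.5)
Your proof is correct and follows essentially the same route as the paper, which also specializes Shalit et al.'s Lemma A4 to total variation: split $\errCF$ by treatment group, write each counterfactual-minus-factual gap as an integral of the pointwise loss against $P_{(0)}-P_{(1)}$, bound it by $S$ times the $L^1$ distance, and use invertibility of $\Phi$ to pass to $P^{\Phi}_{(0)}, P^{\Phi}_{(1)}$. The differences are cosmetic: the paper bounds in history space and changes variables at the end, whereas you change variables first; you make the use of ignorability explicit where the paper leaves it implicit; and your weights $(1-u)$ on $\errCF_{(0)}$ and $u$ on $\errCF_{(1)}$ are consistent with your definitions (the paper's displayed definitions of these two terms have their indices swapped relative to its decomposition).
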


\begin{proof}

Let:
{
\small
\begin{equation}
        \errCF_{(1)}(G^Y, \Phi) = \E_{\bm{H}, \bm{Y}|A=1}\left[\mathcal{L}(G^Y(\Phi(\bm{H}),0), \bm{Y}[0])\right] \quad \mathrm{and} \quad \errCF_{(0)}(G^Y, \Phi) = \E_{\bm{H}, \bm{Y}|A=0}\left[\mathcal{L}(G^Y(\Phi(\bm{H}),1), \bm{Y[1]})\right].
\end{equation}
}

From \citet[Lemma A3]{shalitEstimatingIndividualTreatment2017}, then

{
\small
\begin{equation}
    \errF(G^Y, \Phi) = u \errF_{(1)} (G^Y, \Phi) + (1-u) \errF_{(0)}(G^Y, \Phi)
    \quad\text{and}\quad
    \errCF(G^Y, \Phi) = (1-u) \errCF_{(1)} (G^Y, \Phi) + u\errCF_{(0)}(G^Y, \Phi).
\end{equation}
}

Let $P_{(a)}(\bm{h}) = P(\bm{h}|A=a)$ and $\bar{\mathcal{L}}^{G^Y, \Phi}(\bm{h}, a) = \E_{\bm{Y}|\bm{H}=\bm{h},A=a}[\mathcal{L}(G^Y(\Phi(\bm{h}),a), \bm{Y}[a])]$. We also have that $S=\sup_{\bm{H}, A} \big|  \bar{\mathcal{L}}^{G^Y, \Phi}(\bm{h}, a) \big|$. Then:

{
\begingroup
\allowdisplaybreaks
\small
\begin{align}
    &\errCF(G^Y, \Phi) - \big[(1-u)\errF_{(1)} (G^Y, \Phi) + u \errF_{(0)} (G^Y, \Phi)\big] \\
    =& \big[(1-u) \errCF_{(1)} (G^Y, \Phi) + u\errCF_{(0)}(G^Y, \Phi)\big] -  \big[(1-u)\errF_{(1)} (G^Y, \Phi) + u \errF_{(0)} (G^Y, \Phi)\big] \\
    =& (1-u) \big[\errCF_{(1)} (G^Y, \Phi) - \errF_{(1)}(G^Y, \Phi)\big] + u \big[\errCF_{(0)} (G^Y, \Phi) - \errF_{(0)} (G^Y, \Phi)\big] \\
    =& (1-u) \int_{\domain{\bm{H}}} \bar{\mathcal{L}}^{G^Y, \Phi}(\bm{h}, 1) \big(P_{(0)}(\bm{h}) - P_{(1)}(\bm{h})\big) d\bm{h} +u \int_{\domain{\bm{H}}} \bar{\mathcal{L}}^{G^Y, \Phi}(\bm{h}, 0) \big(P_{(1)}(\bm{h}) - P_{(0)}(\bm{h})\big) d\bm{h} \\ 
    \leq& (1-u) \sup_{\bm{H}} \big|\bar{\mathcal{L}}^{G^Y, \Phi}(\bm{h}, 1) \big| \int_{\domain{\bm{H}}} \big|P_{(0)}(\bm{h}) - P_{(1)}(\bm{h})\big| d\bm{h} +u \sup_{\bm{H}} \big|\bar{\mathcal{L}}^{G^Y, \Phi}(\bm{h}, 0) \big| \int_{\domain{\bm{H}}} \big|P_{(1)}(\bm{h}) - P_{(0)}(\bm{h})\big| d\bm{h} \label{eq:supbound} \\ 
    \leq & (1-u) S \int_{\domain{\bm{H}}} \big|P_{(0)}(\bm{h}) - P_{(1)}(\bm{h})\big| d\bm{h} +u S \int_{\domain{\bm{H}}} \big|P_{(1)}(\bm{h}) - P_{(0)}(\bm{h})\big| d\bm{h} \\ 
    = & S \int_{\domain{\bm{H}}} \big|P_{(0)}(\bm{h}) - P_{(1)}(\bm{h})\big| d\bm{h} \\
    = & S \int_{\domain{\bm{R}}} \big|P^\Phi_{(0)}(\bm{r}) - P^\Phi_{(1)}(\bm{r})\big| d\bm{r} \label{eq:change-var} \\ 
    =& 2S \tvdist{P^{\Phi}_{(0)} \mid\mid P^{\Phi}_{(1)}}
\end{align}
\endgroup
}
where \eqref{eq:supbound} is by the H\"{o}lder's inequality and \eqref{eq:change-var} is by the change of variables formula.
\end{proof}

\begin{lemma} \label{lem:gjs}
Consider the generalized Jensen-Shannon divergence between two distributions $P$ and $Q$ with weighting factors $\pi^P, \pi^Q > 0$ so that $\pi^P+\pi^Q = 1$, then:

{
\small
\begin{equation}
\tvdist{P \mid\mid Q} \leq \frac{\sqrt{\gjsdiv{\pi^P, \pi^Q}{P \mid\mid Q}}}{\sqrt{\pi^P}\pi^Q + \pi^P \sqrt{\pi^Q}}.
\end{equation}
}

\end{lemma}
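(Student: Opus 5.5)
The plan is to combine Pinsker's inequality with the special structure of the mixture $M = \pi^P P + \pi^Q Q$, and then finish with an elementary inequality on the weights.

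\textbf{Rewrite the divergence.} By definition (as in the proof of \cref{thm:global-optimum}),
\begin{equation*}
\gjsdiv{\pi^P, \pi^Q}{P \mid\mid Q} = \pi^P \kldiv{P \mid\mid M} + \pi^Q \kldiv{Q \mid\mid M}.
\end{equation*}

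\textbf{Lower-bound each KL term.} Pinsker's inequality gives
\begin{equation*}
\kldiv{P \mid\mid M} \geq 2\,\tvdist{P \mid\mid M}^2 .
\end{equation*}
The same bound holds for $Q$.

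\textbf{Express the mixture distances through $\tvdist{P \mid\mid Q}$.} Since $\pi^P+\pi^Q=1$, we have $P - M = \pi^Q (P - Q)$ pointwise. Hence
\begin{equation*}
\tvdist{P \mid\mid M} = \pi^Q\, \tvdist{P \mid\mid Q}.
\end{equation*}
Symmetrically, $Q - M = \pi^P (Q - P)$, so
\begin{equation*}
\tvdist{Q \mid\mid M} = \pi^P\, \tvdist{P \mid\mid Q}.
\end{equation*}

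\textbf{Combine.} Substituting these into the rewritten divergence yields
\begin{equation*}
\gjsdiv{\pi^P, \pi^Q}{P \mid\mid Q} \geq 2\,\tvdist{P \mid\mid Q}^2 \left( \pi^P (\pi^Q)^2 + \pi^Q (\pi^P)^2 \right) = 2 \pi^P \pi^Q\, \tvdist{P \mid\mid Q}^2 .
\end{equation*}
Therefore
\begin{equation*}
\tvdist{P \mid\mid Q} \leq \frac{\sqrt{\gjsdiv{\pi^P, \pi^Q}{P \mid\mid Q}}}{\sqrt{2\pi^P\pi^Q}} .
\end{equation*}

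\textbf{Match the stated denominator.} Factor the target denominator as
\begin{equation*}
\sqrt{\pi^P}\pi^Q + \pi^P\sqrt{\pi^Q} = \sqrt{\pi^P\pi^Q}\left(\sqrt{\pi^P} + \sqrt{\pi^Q}\right).
\end{equation*}
By Cauchy--Schwarz, $\sqrt{\pi^P} + \sqrt{\pi^Q} \leq \sqrt{2(\pi^P+\pi^Q)} = \sqrt{2}$. So the stated denominator is at most $\sqrt{2\pi^P\pi^Q}$, which means the stated bound is weaker than (hence implied by) the bound just obtained.

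\textbf{Where care is needed.} The main point is the identity $P - M = \pi^Q(P-Q)$, which turns the mixture-relative distances into multiples of $\tvdist{P \mid\mid Q}$. One should also check that Pinsker is applied in the convention $\mathrm{TV} = \frac12\int|\cdot|$ used in the paper, which gives the constant $2$.

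An alternative route, probably closer to the authors' form, is to apply Pinsker as $\tvdist{P \mid\mid M} \leq \sqrt{\kldiv{P \mid\mid M}/2}$ and bound each KL term by $\mathrm{JS}/\pi$. Then use the triangle inequality $\tvdist{P \mid\mid Q} \leq \tvdist{P \mid\mid M} + \tvdist{Q \mid\mid M}$ weighted appropriately. This produces the sum $\sqrt{\pi^P}\pi^Q + \pi^P\sqrt{\pi^Q}$ directly. Either way the result follows.
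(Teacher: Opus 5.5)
Your main argument is correct. It uses the same ingredients as the paper (the identities $\tvdist{P\mid\mid M}=\pi^Q\,\tvdist{P\mid\mid Q}$ and $\tvdist{Q\mid\mid M}=\pi^P\,\tvdist{P\mid\mid Q}$, plus Pinsker), but combines them differently, and the difference matters.

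The paper starts from $(\sqrt{\pi^P}\pi^Q+\pi^P\sqrt{\pi^Q})\,\tvdist{P\mid\mid Q}=\sqrt{\pi^P}\,\tvdist{P\mid\mid M}+\sqrt{\pi^Q}\,\tvdist{Q\mid\mid M}$. It applies Pinsker to each term and then merges them via $a+b\le\sqrt{2a^2+2b^2}$ into $\sqrt{\pi^P\kldiv{P\mid\mid M}+\pi^Q\kldiv{Q\mid\mid M}}$. That is how the unusual denominator appears directly.

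You instead work with squares and use the exact identity $\pi^P\,\tvdist{P\mid\mid M}^2+\pi^Q\,\tvdist{Q\mid\mid M}^2=\pi^P\pi^Q\,\tvdist{P\mid\mid Q}^2$. This avoids the lossy step $a+b\le\sqrt{2a^2+2b^2}$, which is tight only when $\pi^P=\pi^Q$. You obtain the sharper bound $\tvdist{P\mid\mid Q}\le\sqrt{\gjsdiv{\pi^P,\pi^Q}{P\mid\mid Q}/(2\pi^P\pi^Q)}$. Your Cauchy--Schwarz step recovers exactly the slack the paper gives away. So your route gives a cleaner and tighter constant, which coincides with the stated one only at $\pi^P=\pi^Q=\tfrac12$. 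The paper's route gives a short derivation of the exact stated form.

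Your closing ``alternative route'' is not the paper's argument, and it does not work as stated. Write $\mathrm{JS}$ for $\gjsdiv{\pi^P,\pi^Q}{P\mid\mid Q}$. Bounding each KL term separately by $\mathrm{JS}/\pi$ throws away the other term and costs a constant factor:
\begin{itemize}
\item With the triangle inequality you get $\tvdist{P\mid\mid Q}\le\sqrt{\mathrm{JS}/2}\,\bigl(1/\sqrt{\pi^P}+1/\sqrt{\pi^Q}\bigr)$. At $\pi^P=\pi^Q=\tfrac12$ this equals $2\sqrt{\mathrm{JS}}$, which is larger than the stated $\sqrt{2\,\mathrm{JS}}$.
\item Weighting by $\sqrt{\pi^P}$ and $\sqrt{\pi^Q}$ instead gives $\sqrt{2}$ times the stated bound for all weights.
\end{itemize}
So ``either way the result follows'' is false for that sketch. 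Drop it and keep your main proof.
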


\begin{proof}

{
\small
\begin{align}
    \tvdist{P \mid\mid \pi^P P+\pi^Q Q} &=\frac{1}{2} \int |P (x)-\pi^P P(x)-\pi^Q Q(x)|dx \\
    &= \frac{1}{2} \int  |\pi^Q P(x) - \pi^Q Q(x)|dx \\
    &= \pi^Q \frac{1}{2}\int  |P (x) - Q(x)|dx \\
    &= \pi^Q \tvdist{P \mid\mid Q}.
\end{align}
}

Similarly, we also have $\tvdist{Q \mid\mid \pi^P P+\pi^Q Q}= \pi^P \tvdist{P \mid\mid Q}$. Then

{
\small
\begin{equation}
\begin{aligned}
    \sqrt{\pi^P} \pi^Q \tvdist{P \mid\mid Q} +  \pi^P \sqrt{\pi^Q} \tvdist{P \mid\mid Q} = \sqrt{\pi^P} \tvdist{P \mid\mid \pi^P P+\pi^Q Q} + \sqrt{\pi^Q}\tvdist{Q \mid\mid \pi^P P+\pi^Q Q}.
\end{aligned}
\end{equation}
}

Using Pinsker's inequality \cite{tsybakovLowerBoundsMinimax2009}, $\tvdist{P \mid\mid Q} \leq \sqrt{\kldiv{P \mid\mid Q}/2}$ and the inequality $a+b \leq \sqrt{2a^2+2b^2}$:

{
\small
\begin{align}
    & \sqrt{\pi^P} \tvdist{P \mid\mid \pi^P P+\pi^Q Q} + \sqrt{\pi^Q} \tvdist{Q \mid\mid \pi^P P+\pi^Q Q} \\
    &\leq \sqrt{\pi^P} \sqrt{\frac{1}{2} \kldiv{P \mid\mid \pi^P P+\pi^Q Q}} + \sqrt{\pi^Q} \sqrt{\frac{1}{2} \kldiv{Q \mid\mid \pi^P P+\pi^Q Q}} \\
    &\leq \sqrt{ 2 \pi^P \frac{1}{2} \kldiv{P \mid\mid \pi^P P+\pi^Q Q}+ 2\pi^Q \frac{1}{2} \kldiv{Q \mid\mid \pi^P P+\pi^Q Q}} \\
    &= \sqrt{ \pi^P  \kldiv{P \mid\mid \pi^P P+\pi^Q Q}+ \pi^Q \kldiv{Q \mid\mid \pi^P P+\pi^Q Q}} \\
    &= \sqrt{\gjsdiv{\pi^P, \pi^Q}{P \mid\mid Q}}.
\end{align}
}

It follows that
{
\small
\begin{equation}
     \tvdist{P \mid\mid Q} \leq \frac{\sqrt{\gjsdiv{\pi^P, \pi^Q}{P\mid\mid Q}}}{\sqrt{\pi^P} \pi^Q + \pi^P \sqrt{\pi^Q}}.
\end{equation}
}

Equality occurs when $P$ and $Q$ are identical.
\end{proof}

\errorBoundTheorem*

\begin{proof}

By applying Lemma A3 of \citet{shalitEstimatingIndividualTreatment2017}, \cref{lem:err-cf}, and \cref{lem:gjs}:

{
\small
\begin{align}
    & \errF(G^Y, \Phi) + \errCF(G^Y, \Phi) \\
    &\leq u \errF_{(1)} (G^Y, \Phi) + (1-u) \errF_{(0)}(G^Y, \Phi) + (1-u) \errF_{(1)}(G^Y, \Phi) + u \errF_{(0)} (G^Y, \Phi) + 2S \cdot \tvdist{P^{\Phi}_{(0)} \mid\mid P^{\Phi}_{(1)}} \\
    &= \errF_{(0)}(G^Y, \Phi)+\errF_{(1)} (G^Y, \Phi) +2 S \cdot \tvdist{P^{\Phi}_{(0)} \mid\mid P^{\Phi}_{(1)}} \\
    &\leq  \errF_{(0)}(G^Y, \Phi)+\errF_{(1)} (G^Y, \Phi)  + 2S \frac{\sqrt{\gjsdiv{\pi^{(0)}, \pi^{(1)}}{P\mid\mid Q}}}{\sqrt{\pi^{(0)}} \pi^{(1)} + \pi^{(0)} \sqrt{\pi^{(1)}}}.
\end{align}
}
\end{proof}

\section{Non-small Cell Lung Cancer PK-PD Simulation} \label{sec:nsclc}

\subsection{Simulation details} \label{sec:nsclc-sim-details}

Similar to prior works \citep{limForecastingTreatmentResponses2018, bicaEstimatingCounterfactualTreatment2019, melnychukCausalTransformerEstimating2022}, we evaluate \method{} on a pharmacokinetics-pharmacodynamics model of non-small cell lung cancer \citep{gengPredictionTreatmentResponse2017}. 

The model simulates the tumor volume over time under chemotherapy and radiotherapy. Parameters in the form of $\zeta^{\text{param}}$ are either given or sampled from the distribution in \citet{gengPredictionTreatmentResponse2017}. Consider the discrete-time model of volume of the tumor $Y_t$, the chemotherapy drug concentration $C_t$, and the radiotherapy dose $D_t$:

{
\small
\begin{equation}
Y_{t+1} = \left(\underbrace{1+ \zeta^\rho \log \frac{\zeta^K}{Y_t}}_{\text{tumor growth}} + \underbrace{\epsilon_t}_{\text{noise}} - \underbrace{\zeta^{\beta_c} C_{t+1}}_{\text{chemotherapy}} - \underbrace{\left(\zeta^\alpha D_{t+1} + \zeta^\beta D_{t+1}^2\right)}_{\text{radiotherapy}}\right) Y_t
\end{equation}
}

where one unit of time is one day. A noise term $\epsilon_t \sim \mathrm{Normal}(0, 0.01^2)$ is added to account for randomness in tumor growth. The tumor is assumed to be spherical.

The chemotherapy concentration is modeled as an exponential decay with a half-life of one day as $C_t = \tilde{C}_t + C_{t-1} / 2$. The dosage if used of chemotherapy $\tilde{C}_t$ and radiotherapy $D_t$ are $5.0$ mg/m$^3$ of Vinblastine and $2.0$ Gy fractions, respectively. Heterogeneous static features $V$ are generated by augmenting the prior means of $\zeta^\alpha, \zeta^\beta$. The patients are divided into three groups $V \sim \mathrm{Uniform}\{1,2,3\}$. The augmented prior means $\zeta^{\mu_{\alpha}}, \zeta^{\mu_{\beta_C}}$ are then defined as:

{
\small
\begin{equation}
    \zeta^{\mu_{\beta_C}'} = \left\{\begin{array}{cc}
         1.1 \zeta^{\mu_{\beta_C}} & \text{if } V=3 \\
         \zeta^{\mu_{\beta_C}} & \text{otherwise}
    \end{array}\right.
    \quad \quad
    \zeta^{\mu_\alpha'} = \left\{\begin{array}{cc}
         1.1 \zeta^{\mu_\alpha'} & \text{if } V=1 \\
         \zeta^{\mu_\alpha'} & \text{otherwise}
    \end{array}\right.
\end{equation}
}

Time-varying confounding is introduced by modeling treatment assignment as a Bernoulli variable:

{
\small
\begin{equation}
A^{\mathrm{chemo}}_{t+1},A^{\mathrm{radio}}_{t+1} \sim \mathrm{Bernoulli} \left(\sigma \left(\frac{\gamma}{\mathrm{diam}(Y_{\mathrm{max}})} \left(\overline{\mathrm{diam}(Y_{>t-15})}- \frac{\mathrm{diam}(Y_{\mathrm{max}})}{2} \right)\right) \right)
\end{equation}
}

where $\sigma$ is the sigmoid function, $\mathrm{diam}(Y_{\mathrm{max}})$ is the maximum tumor diameter (13 cm), $\overline{\mathrm{diam}(Y_{>t-15})}$ is the average tumor diameter over the last 15 days. The value of $\gamma$ directly affects the strength of time-dependent confounding, i.e., the larger the value of $\gamma$, the larger the time-dependent confounding. 

Each tumor has an initial stage proportional to the value in \citet{detterbeckTurningGrayNatural2008}. Given the initial stage, the initial volume $Y_1$ is correspondingly sampled from the stage distribution as in \citet{gengPredictionTreatmentResponse2017}. For treatment initial value, $C_1=\tilde{C}_{1}=0$  mg/m$^3$ and $D_1=0$ Gy fractions.

Trajectories are measured until one of the following conditions happens: 1) The tumor diameter reaches $\mathrm{diam}(Y_{\mathrm{max}})=13$ cm. 2) Patient recovers with probability $\exp(-Y_t \zeta^\eta)$ where $\zeta^\eta$ is the tumor cell density as specified in \citet{gengPredictionTreatmentResponse2017}. 3) Termination after reaching the maximum time step $T=60$ days.

\subsection{Experimental details} \label{sec:nsclc-exp-details}

We generate 10000, 1000, and 1000 patients for training, validation, and testing. For training and testing, we simulate the full factual trajectories. All methods are tested on two settings: random trajectories and no confounding. We set $\tau=5$, i.e., 5-step ahead prediction.

\textbf{Random trajectories}. At every time step $t$, we evaluate the methods by predicting potential horizons given the history $\mathcal{H}_t$. Therefore, for each $t$, we need to simulate an exponential of $K^{\tau}$ outcomes for a prediction horizon of $\tau$, which is prohibitively expensive. We can randomly sample a fixed subset of $k$ outcomes for evaluation, which includes both factual and counterfactual outcomes. This setting is similar to the setting in \citet{melnychukCausalTransformerEstimating2022}. We set $k=1$ and simulate 1000 patients, resulting in 1000 test horizons per time step $t$. Since $T=60$ and $\tau=5$, we obtain $55$ steps for history, resulting in $55\times 1000=55 000$ test horizons. 

\textbf{No confounding}.  At every time step $t$, we evaluate the methods by predicting factual horizons given the history $\mathcal{H}_t$. As the parameter $\gamma$ can be controlled, we evaluate the model on a non-confounded ($\gamma=0$) test set. This setting is more difficult than random trajectories. In the random trajectories, the input history resembles the confounded training set distribution, while the test horizons do not. In the no-confounding setting, neither the history nor the test horizon resembles the training set distribution. For this strategy, we would only need to generate the factual outcomes for testing. The setting is similar to \citet{limForecastingTreatmentResponses2018}. We use the trajectories of 1000 patients. 

\section{MIMIC-III Dataset} \label{sec:mimic-synthetic}

\begin{wraptable}{r}{0.275\linewidth}
\tiny
\vspace{-15pt}
\centering
\caption{Selected time-varying covariates and static covariates from the MIMIC-Extract processed data.}
\begin{tabular}{@{}l@{}}
\toprule
\textbf{Time-varying covariates} \\ \midrule
\texttt{heart rate} \\
\texttt{red blood cell count} \\
\texttt{sodium} \\
\texttt{mean blood pressure} \\
\texttt{systemic vascular resistance} \\
\texttt{glucose} \\
\texttt{chloride urine} \\
\texttt{glascow coma scale total} \\
\texttt{hematocrit} \\
\texttt{positive end-expiratory pressure set} \\
\texttt{respiratory rate} \\
\texttt{prothrombin time pt} \\
\texttt{cholesterol} \\
\texttt{hemoglobin} \\
\texttt{creatinine} \\
\texttt{blood urea nitrogen} \\
\texttt{bicarbonate} \\
\texttt{calcium ionized} \\
\texttt{partial pressure of carbon dioxide} \\
\texttt{magnesium} \\
\texttt{anion gap} \\
\texttt{phosphorous} \\
\texttt{venous pvo2} \\
\texttt{platelets} \\
\texttt{calcium urine} \\ 
\midrule
\midrule
\textbf{Static covariates} \\
\midrule
\texttt{gender} \\
\texttt{ethinicity} \\
\texttt{age} \\ 
\bottomrule
\end{tabular}
\label{tab:mimic-covariates}
\vspace{-30pt}
\end{wraptable}

We generate semi-synthetic data based on the MIMIC-III dataset \citep{johnsonMIMICIIIFreelyAccessible2016} using a similar protocol to \citet{melnychukCausalTransformerEstimating2022} with some minor modifications to simplify the simulation. Similarly, the real-world data selection also follows \citet{melnychukCausalTransformerEstimating2022}. We used the MIMIC-Extract preprocessing pipeline \cite{wangMIMICExtractDataExtraction2020} to obtain the hourly aggregated data for the intensive care unit (ICU). All continuous time-varying covariates are further processed using forward filling, backward filling, and normalization.

From the hourly data, we extract 25 time-varying covariates $\bm{X}_t$ and 3 static covariates $\bm{V}$ as in \cref{tab:mimic-covariates}. The static covariates are one-hot encoded into a 44-dimensional feature vector. Patients with less than 20 hours of ICU stay are removed. 

\subsection{Semi-synthetic simulation details}

The ICU maximum stay is cut off at 100 hours, creating time-varying covariates $\bm{X}_t$ with a maximum time step of $T=100$. 

We generate $N^Y=2$ outcomes. For an outcome $m$, denoted as $Y_{i,t}^{(m)}$, the simulation first creates a non-treated outcomes $\tilde{Y}_{i,t}^{(m)}$. The non-treated outcome consists of an endogenous dependency by a combination of B-spline and Gaussian process. For every outcome simulated, we randomly sampled a subset of 10 time-varying covariates $\bm{X}^{(m)}_{i,t}$ to be the exogenous dependency. Assuming the outcome $m$ is affected by a set of treatments $\bm{A}^{(m)}_{i,t}$ with treatment effects $\bm{E}^{(m)}_{i,t}$ (defined later), we then have:

{
\small
\begin{gather}
    \tilde{Y}_{i,t}^{(m)} = \underbrace{u_i^{(m)}}_{\text{initial value}} + \underbrace{\alpha^{b} b_i(t) + \alpha^{g} g_i^{(m)}(t)}_{\text{endogeneous}} + \underbrace{\alpha^{f} f^{(m)}(\bm{X}^{(m)}_{i,t})}_{\text{exogeneous}} + \underbrace{\epsilon^{(m)}_{i,t}}_{\text{noise}} \\
    Y^{(m)}_{i,t} = \tilde{Y}^{(m)}_{i,t} + \bm{E}^{(m)\intercal}_{i,t} \bm{1}
\end{gather}
}

where $\alpha^b$, $\alpha^g$, and $\alpha^f$ are weight for each component. Each component is defined as follows. The initial value is defined as $u_i^{(m)}\sim \mathrm{Uniform}(-0.5, 0.5)$ The spline $b(\cdot) \sim \text{Uniform}\{b^{\text{stable}}(\cdot),b^{\text{fast decline}}(\cdot), b^{\text{slow decline}}(\cdot), b^{\text{fast increase}}(\cdot), b^{\text{slow increase}}(\cdot)\}$ is a cubic spline sampled uniformly from a set of five cubic splines representing different global trends. The Gaussian process instance $g_i^{m}(t) \sim \mathcal{GP}(0, \mathrm{Matern}_{\nu=2.5})$ represent smooth local trend. Exogenous dependency is introduced by a non-linear function of the selected covariates, constructed by random Fourier features approximating an RBF kernel $f^{(m)}(\bm{X}_{i,t}^{(m)})=\phi(\bm{X}_{i,t}^{(m)})^\intercal  \bm{w}$ as in \cite{hensmanVariationalFourierFeatures2018} with $\bm{w} \sim \text{Normal}(0, \sigma_w^2 \bm{I})$. A noise term is added to account for randomness $\epsilon^{(m)}_{i,t} \sim \mathrm{Normal}(0, 0.05^2)$.

We generate $N^A=3$ treatments. The confounded treatment $l$, denoted $A^{(l)}_{i,t}$ and its treatment effect $E^{(l)}_{i,t}$, is generated as:

{
\small
\begin{gather}
    P^{(l)}_{i,t+1} = \sigma \left(\gamma^Y \overline{\bm{Y}^{(l)}_{i,>t-w^{(l)}}} + \gamma^X f^{(l)}(\bm{X}_{i,t}) \right)  \\
    A^{(l)}_{i,t} \sim \mathrm{Bernoulli}(P_{i,t}^{(l)}) \\
    E^{(l)}_{i,t} = \sum_{v=0}^{\omega^{(l)}-1} \frac{P^{(l)}_{i,t-v} A^{(l)}_{i,t-v}b}{2^{-v/2}}
\end{gather}
}

where $\gamma^Y$ and $\gamma^X$ are confounding parameters. $\overline{\bm{Y}^{(l)}_{i,>t-w^{(l)}}}$ is the average outcomes over the last $w^{(l)}$ hours. Note that $\bm{Y}^{(l)}_{i,t}$ is a subset of treated outcomes that confound with the treatment. Similar to the synthetic outcome, we also sampled a subset of 10 time-varying covariates $\bm{X}^{(l)}_{i,t}$ to be the confounding covariates for each treatment. $f^{(l)}$ is a non-linear function similar to $f^{(m)}$. The treatment assignment $A^{(l)}$ is modeled as a Bernoulli random variable with probability $P^{(l)}_{i,t}$. The treatment effect $E^{(l)}_{i,t}$ s just the sum of all treatments used within $\omega^{(l)}$ hours with scaling factor $b$. All treatment has maximum effect right after use but decay exponentially with a half-life of 2 hours.

We can observe that each outcome $Y^{(m)}_{i,t}$ is affected by multiple treatment $\bm{A}^{(m)}_{i,t}$ and each treatment $A^{(l)}_{i,t}$ is confounded with multiple outcomes $\bm{Y}^{(l)}_{i,t}$, creating a complex dynamic.

\subsection{Semi-synthetic experiment details}

We simulate 2000 and 200 patients for training and validation. For training and testing, we simulate the full factual trajectories. For testing, we use the random trajectories strategy similar to \cref{sec:nsclc-exp-details}. We set $\tau=10$, i.e., 10-step ahead prediction. Note that while the semi-synthetic MIMIC-III simulation also has a confounding parameter $\gamma^Y$ and $\gamma^X$, the treatment mechanism is different for different values of $\gamma^Y$ and $\gamma^X$. Therefore, we do not apply the no-confounding setting test to this simulation.

\subsection{Real-world data details}

For real-world data, we extract two binary treatments, including \texttt{\small vaso} (vasopressor $A_t^{\mathrm{vaso}}$) and \texttt{\small vent} (mechanical ventilation $A_t^{\mathrm{vent}}$), and two outcomes $\bm{Y}_t$, including \texttt{\small diastolic blood pressure} and \texttt{\small oxygen saturation}. The ICU maximum stay is cut off at 60 hours, creating trajectories with a maximum time step of $T=60$. 

\subsection{Real-world experiment details}

We select a subset of 5000 and 500 patients for training and validation. At every time step, we evaluate the methods by predicting the factual horizon $\tau$ given the history $\mathcal{H}_t$. We select a different subset of 500 patients for testing. We set $\tau=5$, i.e., 5-step ahead prediction.

\section{Baseline Methods}

\subsection{Common structure}

To ensure a fair comparison for all methods, we use a common structure including several components similar to \cref{fig:architecture}A. All methods include a representation function $\Phi$ and an outcome prediction head $F^Y$ (\cref{tab:arch-fy}). For adversarial-based methods like {\method}, CRN, and CT, an additional balancing head $F^B$ is added (\cref{tab:arch-fb}). $F^Y$ and $F^B$ share the same architectures for all methods, with the only potential difference being the input size for $F^Y$ and $F^B$. All methods are trained with Adam optimizers \cite{kingmaAdamMethodStochastic2017} and the teacher forcing technique \cite{williamsLearningAlgorithmContinually1989}. The dimension of $\Phi(\mathcal{H}_t)$ is referred to as \# hidden units in the hyperparameter search table in \cref{sec:hyperparams}. \# hidden units are selected so that all methods have roughly the same total parameter range.

\begin{table}[!ht]
\small
\centering
\caption{}
\begin{subtable}{0.3\textwidth} 
\centering
\caption{Architecture for $F^Y$} 
\begin{tabular}{@{}c@{}} 
\toprule
\makecell{\textbf{Input}: $[\Phi(\mathcal{H})_t, A_{t+1}]$ \\ or $\Phi(\mathcal{H})_t$ or $F^C(\Phi(\mathcal{H})_t, A_{t+1})$} \\ \midrule
Linear Layer \\
ELU Activation \\
Linear Layer \\
\midrule
\textbf{Output}: $\hat{\bm{Y}}_{t+1}$ or $\hat{\bm{Y}}_t$\\
\bottomrule
\end{tabular} 
\label{tab:arch-fy}
\end{subtable}
\quad
\begin{subtable}{0.2\textwidth}
\centering
\caption{Architecture for $F^B$}
\begin{tabular}{@{}c@{}}
\toprule
\textbf{Input}: $\Phi(\mathcal{H})_t$  \\ \midrule
Linear \\
ELU \\
Linear \\
Softmax \\
\midrule
\textbf{Output}: $\hat{A}_{t+1}$ \\
\bottomrule
\end{tabular}
\label{tab:arch-fb}
\end{subtable}
\end{table}

\subsection{\method}

\textbf{Architecture}. {\method} uses a decoder-only architecture. The representation network $\Phi$ for {\method} is parametrized by either LSTM cells \cite{hochreiterLongShortTermMemory1997} layers or TCN residual blocks \cite{baiEmpiricalEvaluationGeneric2018}. We refer to both as layers. Since TCN is composed of dilated causal convolutions with an exponential receptive field as a function of depth, TCN requires more layers to cover the entire history. The architecture for $\Phi$ is in \cref{tab:arch-caetc}. The output dimension for both projection layers is set to \# hidden units.

\begin{table}[!ht]
\small
\centering
\caption{}
\begin{subtable}{0.3\textwidth}
\centering
\caption{Architecture for $\omega^{\bm{X}, A, \bm{Y}}$}
\begin{tabular}{@{}c@{}}
\toprule
\textbf{Input}: $[\bm{V}, A_t, \bm{Y}_t]$  \\ \midrule
Linear Layer \\
\midrule
\textbf{Output}: $\omega^{\bm{X}, A, \bm{Y}}([\bm{V}, A_t, \bm{Y}_t])$ \\
\bottomrule
\end{tabular}
\end{subtable}
\quad
\begin{subtable}{0.2\textwidth}
\centering
\caption{Architecture for $\omega^{\bm{X}}$}
\begin{tabular}{@{}c@{}}
\toprule
\textbf{Input}: $\bm{X}_t$  \\ \midrule
Linear Layer \\
\midrule
\textbf{Output}: $\omega^{\bm{X}}(\bm{X}_t)$ \\
\bottomrule
\end{tabular}
\end{subtable}
\quad
\begin{subtable}{0.3\textwidth}
\centering
\caption{Architecture for $\Phi$ ({\method})}
\begin{tabular}{@{}c@{}}
\toprule
\textbf{Input}: $\mathcal{H}_t$ \\ \midrule
$\omega^{\bm{X}, A, \bm{Y}}$ and $\omega^{\bm{X}}$ \\
LSTM (\# layers) or TCN (\# layers)\\
Linear Layer \\
ELU Activation\\
\midrule
\textbf{Output}: $\Phi(\mathcal{H})_t$\\
\bottomrule
\end{tabular}
\label{tab:arch-caetc}
\end{subtable}
\end{table}

For {\method}, two additional heads $F^A$ (\cref{tab:arch-fa}) and $F^X$ (\cref{tab:arch-fx}) and a conditioning layer $F^C$ (\cref{tab:arch-fc}) are added.

\begin{table}[!ht]
\small
\centering
\caption{}
\begin{subtable}{0.3\textwidth}
\centering
\caption{Architecture for $F^A$}
\begin{tabular}{@{}c@{}}
\toprule
\textbf{Input}: or $\Phi(\mathcal{H})_t$ or $F^C(\Phi(\mathcal{H})_t, A_{t+1})$ \\ \midrule
Linear Layer \\
ELU Activation\\
Linear Layer \\
Softmax \\
\midrule
\textbf{Output}: $\hat{A}_t$ or $\hat{A}_{t+1}$\\
\bottomrule
\end{tabular}
\label{tab:arch-fa}
\end{subtable}
\quad
\begin{subtable}{0.2\textwidth}
\centering
\caption{Architecture for $F^X$}
\begin{tabular}{@{}c@{}}
\toprule
\textbf{Input}: $\Phi(\mathcal{H})_t$  \\ \midrule
Linear Layer \\
ELU Activation \\
Linear Layer \\
\midrule
\textbf{Output}: $\hat{\bm{X}}_t$ \\
\bottomrule
\end{tabular}
\label{tab:arch-fx}
\end{subtable}
\quad
\begin{subtable}{0.25\textwidth}
\centering
\caption{Architecture for $F^C$}
\begin{tabular}{@{}c@{}}
\toprule
\textbf{Input}: $\Phi(\mathcal{H}_t)$ and $a^{(k)}$  \\ \midrule
2x Embedding for $R^\xi$ and $R^\beta$\\
\midrule
\textbf{Output}: $\Phi(\mathcal{H}_t) \odot \xi^{(k)} \oplus \beta^{(k)}$ \\
\bottomrule
\end{tabular}
\label{tab:arch-fc}
\end{subtable}
\end{table}

\textbf{Training}. The training process is in \cref{alg:training}. We set $\delta^A=0.1$, $\delta^X=0.1$ and $\delta^B=0.0001$ for all experiments.

\begin{algorithm}[!ht] 
  \caption{Training algorithm for {\method}}
  \begin{algorithmic} \label{alg:training}
    \STATE \textbf{Input}: Dataset $\dataset{D}=\left\{\left\{A_{i,t}, \bm{Y}_{i,t}, \bm{X}_{i,t}\right\}_{t=1}^{T}, \bm{V}_{i}\right\}_{i=1}^N$, hyperparameters $\delta^A$, $\delta^X$, $\delta^E$
    \FOR{epoch $e_j=1$ to $e_{\mathrm{max}}$}
        \STATE Compute loss $\mathcal{L}(\theta_{F^A}, \theta_{F^Y}, \theta_{F_X}, \theta_{F^C}, \theta_{\Phi})$
        {
        \vspace{-8pt}
        \begin{equation*}
            = \big[\mathcal{L}^{RY}(\theta_{F^Y}, \theta_\Phi) + \mathcal{L}^{Y}(\theta_{F^Y},\theta_{F^C},\theta_\Phi)\big] + \delta^A \big[\mathcal{L}^{RA}(\theta_{F^A},\theta_\Phi)+\mathcal{L}^{C}(\theta_{F^A},\theta_{F^C},\theta_\Phi)\big] 
            + \delta^X \mathcal{L}^{RX}(\theta_{F^X}, \theta_\Phi) + \delta^E \mathcal{L}^E(\theta_\Phi)
        \end{equation*}
        \vspace{-20pt}
        }
        \STATE Compute gradient $\nabla_{\theta_{F^A}, \theta_{F^Y}, \theta_{F_X}, \theta_{F^C}, \theta_{\Phi}} [\mathcal{L}(\theta_{F^A}, \theta_{F^Y}, \theta_{F^X}, \theta_{F^C}, \theta_{\Phi})]$
        \STATE Compute loss $\delta^E \mathcal{L}^B(\theta_{F^B})$
        \STATE Compute gradient $\nabla_{\theta_{F^B}} [\delta^E \mathcal{L}^B(\theta_{F^B})]$
        \STATE Update parameters $\theta_{F^A}, \theta_{F^Y}, \theta_{F^X}, \theta_{F^C}, \theta_{\Phi}$ by gradient $\nabla_{\theta_{F^A}, \theta_{F^Y}, \theta_{F_X}, \theta_{F^C}, \theta_{\Phi}} [\mathcal{L}(\theta_{F^A}, \theta_{F^Y}, \theta_{F^X}, \theta_{F^C}, \theta_{\Phi}))]$
        \STATE Update parameters $\theta_{F^B}$ by gradient $\nabla_{\theta_{F^B}} [\delta^E \mathcal{L}^B(\theta_{F^B})]$
    \ENDFOR
  \end{algorithmic}
  
\end{algorithm}

\subsection{RMSN}

\begin{wraptable}{r}{0.3\linewidth}
\small
\centering
\caption{Architecture for $\Phi$ (RMSN)}
\begin{tabular}{@{}c@{}}
\toprule
\textbf{Input}: $\{A_{t'}\}_{t'=1}^t$ or  $\mathcal{H}_t$  \\ \midrule
LSTM (\# layers)\\
Linear Layer \\
ELU Activation\\
\midrule
\textbf{Output}: $\Phi(\{A_{t'}\}_{t'=1}^t)$ or $\Phi(\mathcal{H}_t)$\\
\bottomrule
\end{tabular}
\label{tab:arch-rmsn}
\end{wraptable}

RMSN requires the estimation of $P(A_{t+1}|A_t)$ and $P(A_{t+1}|\mathcal{H}_t)$ for IPTW, each of which is parametrized by an LSTM. They are referred to as the propensity-treatment network and the propensity-history network. RMSN uses an encoder-decoder architecture. The propensity-treatment network and propensity-history network are both composed of a representation function $\Phi$ and a treatment classification head, for which we reuse the architecture of $F^B$ (\cref{tab:arch-fb}). The encoder and decoder are both composed of a representation function $\Phi$ and an outcome prediction head $F^Y$ (\cref{tab:arch-fy}). The architecture of $\Phi$ is similar for the propensity-treatment network, the propensity-history network, the encoder, and the decoder as in \cref{tab:arch-rmsn}. In total, RMSN requires four different individual networks for training.

\subsection{CRN}

\begin{wraptable}{r}{0.3\linewidth}
\small
\vspace{-30pt}
\centering
\caption{Architecture for $\Phi$ (CRN)}
\begin{tabular}{@{}c@{}}
\toprule
\textbf{Input}: $\mathcal{H}_t$  \\ \midrule
LSTM (\# layers)\\
Linear Layer \\
ELU Activation\\
\midrule
\textbf{Output}: $\Phi(\mathcal{H})_t$\\
\bottomrule
\end{tabular}
\label{tab:arch-crn}
\vspace{-20pt}
\end{wraptable}

CRN uses an encoder-decoder architecture with a gradient reversal layer for adversarial training. Both the encoder and decoder are composed of a representation function $\Phi$ (\cref{tab:arch-crn}), an outcome prediction head $F^Y$ (\cref{tab:arch-fy}), and a treatment balancing head $F^B$ (\cref{tab:arch-fb}). CRN requires a hyperparameter for adversarial training, which is similar to our $\delta^B$. We set CRN's $\delta^B=0.1$.

\subsection{CT}

\begin{wraptable}{r}{0.3\linewidth}
\small
\vspace{-10pt}
\centering
\caption{Architecture for $\Phi$ (CT)}
\begin{tabular}{@{}c@{}}
\toprule
\textbf{Input}: $\mathcal{H}_t$  \\ \midrule
3x Transformers (\# layers)\\
\midrule
\textbf{Output}: $\Phi(\mathcal{H})_t$\\
\bottomrule
\end{tabular}
\label{tab:arch-ct}

\end{wraptable}

CT uses a decoder-only architecture and domain confusion loss for adversarial training. The decoder is composed of a representation function $\Phi$ (\cref{tab:arch-ct}), an outcome prediction head $F^Y$ (\cref{tab:arch-fy}), and a treatment balancing head $F^B$ (\cref{tab:arch-fb}). CT also requires an exponential moving average (EMA) copy of the decoder. We reuse original hyperparameters in CT for the EMA copy. CT requires a hyperparameter for adversarial training, which is similar to our $\delta^B$. We set CT's $\delta^B=0.01$.

\subsection{Adversarial weight $\delta^B$}

For all methods that use $\delta^B$, we perform an exponential increase in the value of $\delta^B_e$ as a function of the epoch until we reach $\delta^B$.

\begin{equation}
    \delta^B_{e_j} = \delta^B \left(\frac{2}{1+\exp(-10\times e_j/e_{\mathrm{max}})} - 1\right)
\end{equation}

\section{Hyperparameter Search} \label{sec:hyperparams}

\begin{table}[!ht]
\small
\centering
\begin{threeparttable}[c]
\caption{Hyperparameter search range for every method \& dataset. If multiple components (e.g., encoder \& decoder) are in the same cell, the hyperparameter is shared for those components (e.g., encoder and decoder have the same number of layers) for the same run. Otherwise, the hyperparameter is component-specific for the same run. We run a random grid search with 10 combinations for every method for fair comparison. The final results reported are averaged over 3 random seeds.} 
\begin{tabular}{@{}c|c|c|cc@{}}
\toprule
Method & Components & Hyperparameters & NSCLC range & \makecell{MIMIC III range  \\ (semi-synthetic \& real)} \\ 
\midrule
\midrule
\multirow{9}{*}{\makecell{LSTM \\ \& CRN}} & \multirow{3}{*}{\makecell{Encoder \\ \& Decoder}} & \# layers & 1 & 1,2 \\
 &  & \# hidden units & 16, 32, 48 & 32, 48, 64 \\
 &  & Dropout rate & \multicolumn{2}{c}{0.0, 0.1}  \\
\cmidrule{2-5}
 & \multirow{3}{*}{Encoder} & \# epochs & 100 & 150 \\
 &  & Learning rate & \multicolumn{2}{c}{0.001, 0.0001} \\
 &  & Batch size & \multicolumn{2}{c}{64, 128} \\
\cmidrule{2-5}
 & \multirow{3}{*}{Decoder} & \# epochs & 50 & 100 \\
 &  & Learning rate & \multicolumn{2}{c}{0.001, 0.0001}  \\
 &  & Batch size & \multicolumn{2}{c}{512, 1024} \\
\midrule
\multirow{7}{*}{RMSN} & \multirow{3}{*}{\makecell{Propensity \tnote{1} \\ \& Encoder \\ \& Decoder}} & \# layers & 1 & 1,2 \\
 &  & \# hidden units & 16, 32, 48 & 32, 48, 64 \\
 &  & Dropout rate & \multicolumn{2}{c}{0.0, 0.1}  \\
\cmidrule{2-5}
 & \multirow{2}{*}{\makecell{Propensity\tnote{1} \\ \& Encoder}} & \# epochs & 100 & 150 \\
 &  & Learning rate & \multicolumn{2}{c}{0.001, 0.0001} \\
\cmidrule{2-5}
 & \multirow{2}{*}{Decoder} & \# epochs & 50 & 100 \\
 &  & Learning rate & \multicolumn{2}{c}{0.001, 0.0001}  \\
\midrule
\multirow{8}{*}{CT} & & \# layers & 1 & 1,2 \\
 &  & \# hidden units & 16, 24, 32 & 18, 24, 36 \\
 &  & \# attention heads & 2  & 2,3 \\
 &  & Max positional encoding & 15 & 30 \\
 &  & \# epochs & 150 & 250 \\
 &  & Dropout rate & \multicolumn{2}{c}{0.0, 0.1}  \\
 &  & Learning rate & \multicolumn{2}{c}{0.001, 0.0001} \\
 &  & Batch size & \multicolumn{2}{c}{64, 128} \\
\midrule
\multirow{6}{*}{\makecell{\method{}\\ \tiny LSTM}} & & \# layers & 1 & 1,2 \\
 &  & \# hidden units & 16, 32, 48 & 32, 48, 64 \\
 &  & \# epochs & 150 & 250 \\
 &  & Dropout rate & \multicolumn{2}{c}{0.0, 0.1}  \\
 &  & Learning rate & \multicolumn{2}{c}{0.001, 0.0001} \\
 &  & Batch size & \multicolumn{2}{c}{64, 128} \\
\midrule
\multirow{8}{*}{\makecell{\method{}\\ \tiny TCN}} & & \# layers & 3,4, 5 & 4,5, 6 \\
 &  & \# hidden units & 16, 24, 32 & 24, 32, 40 \\
 &  & Kernel size & \multicolumn{2}{c}{3, 5} \\
 &  & Dilation factor & \multicolumn{2}{c}{2} \\
 &  & \# epochs & 150 & 250 \\
 &  & Dropout rate & \multicolumn{2}{c}{0.0, 0.1}  \\
 &  & Learning rate & \multicolumn{2}{c}{0.001, 0.0001} \\
 &  & Batch size & \multicolumn{2}{c}{64, 128} \\
\midrule
 \bottomrule
\end{tabular}
\begin{tablenotes}
   \item [1] Both the propensity-treatment \& propensity-history networks are referred to as ``propensity''.
 \end{tablenotes}
\end{threeparttable}
\end{table}

\end{document}